\documentclass{article} %
\usepackage[utf8]{inputenc}

\usepackage{etoolbox}

\NewDocumentCommand{\flexequation}{ m O{} O{} O{} O{} }{%
        \begin{align*}
            #1#3
        \end{align*}#5
}

\usepackage{txie}
\usepackage{algorithm}
\usepackage{algorithmic}

\usepackage{geometry}
\makeatletter
\usepackage{parskip}
\def\thm@space@setup{%
  \thm@preskip=\parskip
  \thm@postskip=\parskip
}
\makeatother

\let\oldparagraph\paragraph
\renewcommand{\paragraph}[1]{\oldparagraph{#1.}}

\usepackage{booktabs}
\usepackage[T1]{fontenc}
\usepackage{lmodern}
\usepackage{microtype}
\usepackage[scaled=.91]{helvet}
\usepackage{inconsolata}

\usepackage{pgfplots}
\pgfplotsset{compat=1.18} %
\usepgfplotslibrary{colormaps}
\usepackage[most]{tcolorbox}
\usepackage{soul}
\usepackage{calc}
\usepackage{ulem}
\usepackage{booktabs}
\usepackage{listings}
\usepackage{threeparttable}
\usepackage{multirow}
\usepackage{makecell}
\usepackage{changepage}

\newenvironment{inlineenum}
  {\begin{enumerate*}[label=\bfseries(\arabic*)]}
  {\end{enumerate*}}

\allowdisplaybreaks

\newcommand{\titlestr}{Breaking the Capability Ceiling of LLM Post-Training\\ by Reintroducing Markov States}

\title{\titlestr{}}
\author{Yurun Yuan\footnotemark[2] \qquad  Tengyang Xie\footnotemark[1]~\footnotemark[2]
\\\\
\footnotemark[2] UW-Madison}

\date{\today}

\usepackage{tabularx}
\definecolor{highlightbgcolor}{HTML}{D0E0F7} 
\definecolor{lightgray}{gray}{0.9}
\newcommand{\metricname}[1]{{\scriptsize \textbf{#1}}}

\newcommand{\smallrow}[1]{\scriptsize \color{gray} #1}

\begin{document}
\maketitle

\def\thefootnote{}
\footnotetext{Email: \texttt{\{\href{mailto:yurun\_yuan@cs.wisc.edu}{{yurun\_yuan}},\href{mailto:tx@cs.wisc.edu}{tx}\}@cs.wisc.edu}}
\def\thefootnote{$*$}
\footnotetext{Corresponding author}
\def\thefootnote{\arabic{footnote}}

\begin{abstract}
Reinforcement learning (RL) has become a standard paradigm for post-training and aligning Large Language Models (LLMs), yet recent evidence suggests it faces a persistent ``capability ceiling'': unlike classical RL systems that discover novel strategies, RL for LLMs often acts as a mere refiner of patterns already latent in pre-trained weights. In this work, we identify a fundamental structural bottleneck: while classical RL relies on compact, informative Markov states, current LLM post-training formulations are tethered to an ever-expanding history of actions.

We revisit a classical principle long central to RL yet absent from LLM post-training: explicit Markov states. Theoretically, we provide rigorous guarantees demonstrating that leveraging estimated Markov states can significantly reduce sample complexity. Empirically, we show that introducing Markov states consistently breaks the performance boundaries of standard RL post-training across a suite of complex logic puzzles. Our findings suggest that moving beyond ``history-as-state'' modeling in favor of structured Markovian representations is essential for unlocking open-ended discovery and genuinely new reasoning capabilities in Generative AI.
\end{abstract}

\section{Introduction}

Reinforcement learning (RL) has emerged as the definitive paradigm for post-training and aligning Large Language Models (LLMs), enabling breakthroughs in complex reasoning, mathematical problem-solving, and agentic behavior \citep{jaech2024openai,guo2025deepseek}. By shifting from static supervised fine-tuning to dynamic environment interaction, RL allows models to explore vast solution spaces. In the prevailing RL post-training paradigm, LLMs are formulated as agents where the action space consists of discrete tokens and the state is defined by the concatenation of all preceding actions \citep{guo2025deepseek}.

Despite these successes, growing evidence suggests that RL primarily functions as a mechanism for sharpening search within regions already reachable by the base model, rather than fundamentally expanding its solution space \citep{yue2025does, wu2025invisible, shao2025spurious, yeo2025demystifying}. While some contemporary studies claim that RL can elicit novel capabilities, these gains typically manifest as modest extensions of the pre-training boundary or are contingent upon dense reward shaping or specialized domain-specific designs \citep{zhang2025interplay,sun2025rl,yuan2025f}. \citet{foster2025good} provide theoretical evidence that significant capability expansion is often computationally prohibitive, as the cost of discovery is lower-bounded by either the exponential complexity of the parameter search space or the inherent limitations of the base model's coverage.

This ``capability ceiling'', however, appears to be a unique artifact of the LLM--RL intersection rather than an inherent limitation of RL itself. In classical RL environments---ranging from robotic manipulation to complex board games---RL has been serving as a powerful discovery engine rather than a mere capability refiner. For example, systems like AlphaZero \citep{silver2017mastering} and MuZero \citep{schrittwieser2020mastering} demonstrated the ability to transcend human knowledge, developing novel strategic patterns and superhuman heuristics that were entirely absent from their initial programming or training data. 
The presence of a performance plateau in RL post-training for LLMs indicates that current formulations may be structurally constrained, necessitating a rethink of foundational assumptions. 

A critical distinction emerges when comparing classical RL with its application to LLMs. In classical RL applications, such as robotics or board games like Go, the Markov states are central: a compact representation that encapsulates all information necessary for optimal future decision-making. In contrast, current LLMs operate over a cumulative sequence of previous tokens, relying on an ever-expanding, inherently noisy history rather than a distilled, Markovian representation. Therefore, we argue that this ``capability ceiling'' is a consequence of the \textit{action-sequence-based} formulation and hypothesize that the reintroduction of the Markov states is the key to unlocking genuinely new reasoning capabilities and improving generalization. 

In this paper, we revisit a classical RL principle, explicit Markov state (estimation), and demonstrate its critical importance for LLM post-training. We provide both theoretical foundations and empirical evidence showing that this simple idea yields significant improvements over traditional history-dependent formulations. Our primary contributions are as follows:
\begin{itemize}[nosep]
\item \textbf{Breaking the Capability Ceiling:} Through extensive benchmarking on a suite of complex logic puzzles, we show that models with explicit Markov states consistently surpass the performance boundaries of traditional RL post-training, achieving high success rates on tasks where history-dependent models plateau or fail.
\item \textbf{Robust Generalization:} We demonstrate that Markov models exhibit superior out-of-distribution (OOD) generalization, effectively solving puzzles with higher structural complexity and search depth than those encountered during training.
\item \textbf{Sample Efficiency Guarantees:} We provide theoretical guarantees demonstrating that Markovian learning achieves significantly lower sample complexity compared to standard action-sequence-based formulations.
\end{itemize}

Taken together, our findings suggest that the path toward artificial general intelligence and open-ended capability growth may require moving beyond ``history-as-state'' modeling in favor of Markovian states that better align with the underlying logic of complex reasoning tasks.

\section{Preliminaries}
\label{sec:preliminaries}

\subsection{Markov Decision Process, Policies, and Value Functions}
\label{sec:mdp-plc-value}

RL provides a framework for sequential decision-making problems where an agent interacts with an environment to maximize cumulative rewards. In the context of Markov Decision Processes (MDPs), which provide the theoretical foundation for RL, we consider an episodic finite-horizon framework. Formally, a horizon-$H$ episodic MDP $M=(H,\Scal, \Acal, \Pcal, r, \rho)$ consists of a (potentially very large) state space $\Scal$, an action space $\Acal$, a probability transition function $\Pcal: \Scal \times \Acal \to \Delta(\Scal)$, a reward function $r: \Scal \times \Acal \to \mathbb{R}$, and an initial state distribution $\rho \in \Delta(\Scal)$. The state space is typically layered such that $\Scal = \Scal_1 \cup \Scal_2 \cup \cdots \cup \Scal_H$, where $\Scal_h$ is the set of states reachable at step $h$. A policy $\pi: \Scal \to \Delta(\Acal)$ maps states to distributions over actions and induces a distribution over trajectories $\tau = (s_1, a_1, \ldots, s_H, a_H)$ and rewards $(r_1, \ldots, r_H)$, where the initial state is sampled as $s_1 \sim \rho$, and for $h=1,\ldots,H$: $a_h \sim \pi(s_h)$, $r_h = r(s_h, a_h)$, and $s_{h+1} \sim \Pcal(s_h, a_h)$. We let $\E_{\tau \sim \pi}[\cdot]$ and $\mathbb{P}_{\tau \sim \pi}[\cdot]$ denote expectation and probability under this process, and $\E_{\pi}[\cdot]$ and $\mathbb{P}_{\pi}[\cdot]$ for brevity when $\tau$ is not explicitly mentioned. 

The expected cumulative reward of a policy $\pi$ is given by $J(\pi) = \E_{\tau \sim \pi}[r(\tau)]$, where $r(\tau) = \sum_{h=1}^H r(s_h,a_h)$. The value function and $Q$-function of $\pi$ is defined as
$
V_h^\pi(s) \coloneqq  \mathbb{E}_\pi \left[ \sum_{\ell=h}^H r_\ell(s_\ell, a_\ell) \mid s_h = s \right], \quad Q_h^\pi(s, a) \coloneqq  \mathbb{E}_\pi \left[ \sum_{\ell=h}^H R_\ell(s_\ell, a_\ell) \mid s_h = s, a_h = a \right].
$
Additionally, the advantage function $A^\pi$ represents the relative benefit of taking a specific action $a$ compared to following the policy $\pi$ on average, defined as:
$
A_h^\pi(s, a) \coloneqq  Q_h^\pi(s, a) - V_h^\pi(s).
$
We denote the optimal policy as $\pi^\star$ (i.e., $\pi^\star \in \text{argmax}_\pi J(\pi)$) and its associated value, $Q$, and advantage functions as $V^\star$, $Q^\star$, and $\Astar$, respectively. 

\subsection{Reinforcement Learning for Language Models}
\label{sec:prelim-rl-llm}

In the context of language models, the model serves as the policy $\pi$, and the input and output of the model maps to the state $s$ and action $a$ respectively. In the single-turn setting where $x \sim \rho$ denotes the input prompt and $y_1, y_2, \dotsc, y_H$ denote the output tokens, we can define $s_1 \coloneqq x$ and $s_h \coloneqq (x,y_1,\dotsc,y_{h-1})$ for $h > 1$, with $a_h \coloneqq y_h$ for $h=1,\dotsc,H$. In the multi-turn setting, which consists of multiple interaction turns $(x^\iter{1}, y_{1:H}^\iter{1})$, $(x^\iter{2}, y_{1:H}^\iter{2})$, and so forth, we can adapt the transition function accordingly. Here, $y_{1:H}^\iter{i}$ is a shorthand notation for the sequence of tokens $y_1^\iter{i}, y_2^\iter{i}, \dotsc, y_H^\iter{i}$ in the $i$-th turn. For instance, if a state-action pair $(s,a)$ contains the complete response for one turn (e.g., in a conversation with three or more turns), where $s = (x^\iter{1}, y_{1:H}^\iter{1}, x^\iter{2}, y_{1:H-1}^\iter{2})$ and $a = y_H^\iter{2}$, the next state would transition to $s' = (x^\iter{1}, y_{1:H}^\iter{1}, x^\iter{2}, y_{1:H}^\iter{2}, x^\iter{3})$.

In standard RL, the objective is to find a policy $\pi$ that maximizes the expected cumulative reward $J(\pi) = \E_{\tau \sim \pi}[r(\tau)]$.
In many practical applications, particularly in the context of large language models, it is beneficial to incorporate a regularization term that encourages the learned policy to stay close to a reference policy $\piref$. This leads to the KL-regularized RL objective \citep{ziebart2008maximum,ziebart2010modeling,neu2017unified,ouyang2022training,xie2024exploratory,yuan2025trajectory}
\begin{align*}
J_{\beta}(\pi) = \E_{\tau \sim \pi}[r(\tau)] - \beta \cdot \E_{\tau \sim \pi}\left[\log\frac{\pi(\tau)}{\piref(\tau)}\right],
\end{align*}
where $\beta > 0$ is a regularization parameter that controls the strength of the penalty $D_{\text{KL}}(\pi \| \piref) = \E_{\tau \sim \pi}\left[\log\frac{\pi(\tau)}{\piref(\tau)}\right]$, known as the Kullback-Leibler divergence. We use $\pi^\star_\beta \in \text{argmax}_\pi J_\beta(\pi)$ to denote the KL-regularized optimal policy. 

Proximal Policy Optimization \citep[PPO;][]{schulman2017proximal} and Group Relative Policy Optimization \citep[GRPO;][]{shao2024deepseekmath} represent the primary algorithmic frameworks currently utilized for reinforcement learning post-training and alignment.
They introduce a clipped surrogate objective to constrain policy updates:
\begin{align*}
\mathcal{J}(\theta) = \E_{(s_h, a_h) \sim \pi_{\theta_{\sf old}}} \Big[ \min\Big( \tfrac{\pi_\theta(a_h\mid s_h)}{\pi_{\theta_{\sf old}}(a_h\mid s_h)} \widehat A_h(s_h,a_h), 
\text{clip}\Big(\tfrac{\pi_\theta(a_h\mid s_h)}{\pi_{\theta_{\sf old}}(a_h\mid s_h)}, 1-\varepsilon, 1+\varepsilon\Big) \widehat A_h(s_h,a_h) \Big) \Big],
\end{align*}
where $\widehat A_h$ is the advantage estimate, and $\varepsilon$ is a hyperparameter. In PPO, the advantage $\widehat A_h$ is typically computed using Generalized Advantage Estimation \citep[GAE;][]{schulman2015high} to estimate the advantage of the KL regularized reward $r(s_h,a_h) - \beta \log\frac{\pi(a_h\mid s_h)}{\piref(a_h\mid s_h)}$.
GRPO is a policy-based method that, in practical implementations for LLMs like DeepSeek-R1, samples $G$ responses $o^\iter{1}, \dots, o^\iter{G}$ for each prompt $x$ and computes advantages by normalizing rewards within each prompt group.
This response-level advantage $\widehat A^\iter{i}$ is then used to replace the step-wise advantage function $\widehat A_h(s_h,a_h)$ in the objective $\mathcal{J}(\theta)$. GRPO objective then accommodates the KL-regularization at the end. GRPO is often considered a simpler alternative to PPO for post-training LLMs. This is partly because PPO typically involves training a separate critic network and incorporates more complex mechanisms for policy updates. In the context of LLMs, the full complexity of PPO might not always be necessary, leading to the adoption of more streamlined policy gradient methods like GRPO.

\section{Reintroducing Markov States to LLM Post-Training}

\subsection{Limits of Current RL for LLMs}

Despite the empirical success of RL in improving the reasoning performance of large language models \citep{guo2025deepseek, jaech2024openai}, it remains debated whether RL can induce capabilities that fundamentally exceed those acquired during pre-training. A growing body of evidence suggests that RL primarily reweights or amplifies reasoning patterns already latent in the base model, rather than creating genuinely novel capabilities \citep{yue2025does, wu2025invisible, shao2025spurious, yeo2025demystifying}. 

Conversely, reports of emergent capabilities under RL typically rely on restrictive training designs \citep{yuan2025f,zhang2025interplay,sun2025rl}.
These mechanisms guide learning toward known solution manifolds, suggesting that the observed gains reflect controlled extrapolation within a limited hypothesis space rather than the discovery of new reasoning trajectories.

Recent work \citep{foster2025good} also provides theoretical evidence for this fundamental boundary. Let $C_{\sf cov}(\pi^\star_\beta)\coloneqq \max_{s,a}\frac{\pi^\star_\beta(y\mid x)}{\pi_{\sf ref}(y\mid x)}$ be the coverage coefficient of the base model $\piref$ w.r.t.~the sequence of tokens, which controls the quality of pass@$k$ performance of the base model.

The hope for emergent capabilities under RL is that: if both the statistical and computational complexity of RL were much smaller than $C_{\sf cov}(\pi^\star_\beta)$ (e.g., \citealt{xie2024exploratory} shows that the statistical complexity can be much smaller than $C_{\sf cov}(\pi^\star_\beta)$ in certain cases), then RL could yield significant gains beyond the base model's pass@$k$ performance. However, \citet{foster2025good} establish the following lower bound on computational complexity:
under the KL-regularized RL objective, achieving a near-optimal policy requires the number of sampling oracle calls (and runtime) $T_{\sf comp}$ to be lower-bounded by
\flexequation{\Omega\left(\min\left\{ C_{\sf cov}(\pi^\star_\beta), \exp\left( \frac{R_{\sf max}}{\beta} \right) \right\}\right)}[][,][,][]
even for the linearly-parameterized softmax policy class.
$R_{\sf max}$ is an upper bound on the reward $r(\tau)$.

This lower bound reveals a strict ``discovery'' bottleneck: for RL to find a near-optimal policy efficiently, an algorithm is forced to either
\begin{inlineenum}
    \item rely on the base model to already cover the optimal response with small $C_{\sf cov}(\pi^\star_\beta)$, which in turn implies that the base model's pass@$k$ performance is already reasonable, or
    \item resort to brute-force exploration over the response space, at a cost that grows exponentially as $e^{R_{\sf max}/\beta}$.
\end{inlineenum}
In particular, if the base model's pass@$k$ performance is poor (i.e., $C_{\sf cov}(\pi^\star_\beta)$ is large), RL is pushed into the exponential-cost regime, making the discovery of truly novel solutions computationally intractable.

Collectively, these findings point to a ceiling of current RL-based post-training paradigms: rather than expanding the model's solution space, RL predominantly sharpens search within regions already accessible to the base model, yielding at most modest extensions beyond its pre-training boundary. This motivates re-examining the foundations of RL for LLMs and casts doubt on whether existing approaches alone can support open-ended capability growth.

\subsection{A Didactic Example}
\label{sec:combination-lock}

\newcommand{\combinationLockPlotSnippet}{%
    \includegraphics[width=\linewidth]{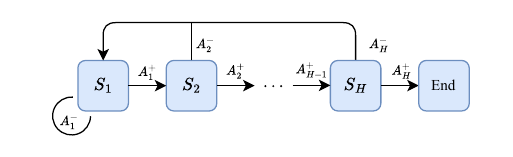}
    \caption{The \emph{Combination Lock} problem with horizon $H$. At each state $S_h$, the correct action $A_h^+$ advances the agent to the next state; the incorrect action $A_h^-$ resets it to the starting position.}
    \label{fig:combination-lock-game}
}

\newcommand{\combinationLockEvalResultPlotSnippet}{
    \includegraphics[width=\linewidth]{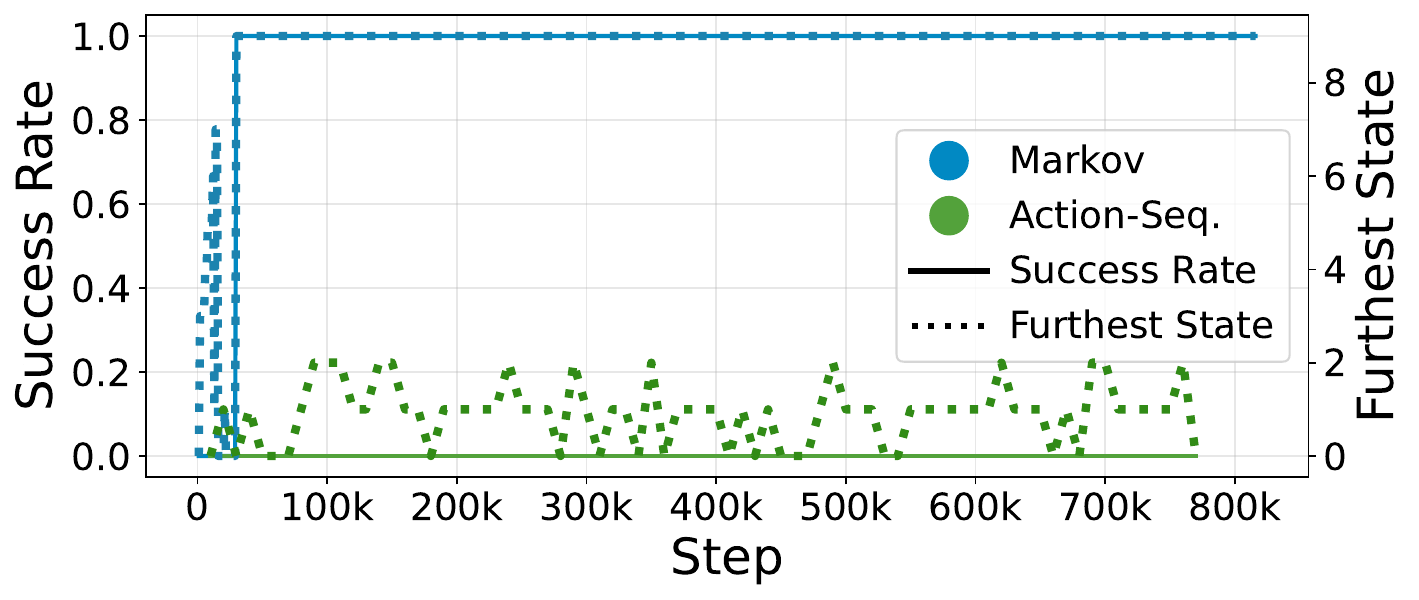}
    \caption{Comparison of Markov agent and action-sequence agent on \emph{Combination Lock} task. We focus on two metrics: the success rate in reaching the final goal state and the furthest state reached before the agent triggers an incorrect action. }
    \label{fig:combination-lock-exp-eval}
}

\begin{figure}[htbp]
  \begin{minipage}[t]{0.48\textwidth}
    \centering
    \combinationLockPlotSnippet
  \end{minipage}
  \hfill %
  \begin{minipage}[t]{0.48\textwidth}
    \centering
    \combinationLockEvalResultPlotSnippet
  \end{minipage}
\end{figure}

We start with an empirical analysis on a didactic task: the \emph{Combination Lock} problem. As illustrated in \cref{fig:combination-lock-game}, this environment consists of $H$ linearly ordered states, $(S_1, S_2, \cdots, S_{H})$, and two discrete actions. 
The agent begins at $S_1$; selecting the correct action $A_h^+$ advances the agent to the next state, while an incorrect choice $A_h^-$ resets it to the starting position. Each transition incurs a reward of $-1$, except for the final goal state, which yields a reward of $0$ and terminates the episode. Consequently, the agent must memorize the sequence of $H$ correct actions $(A_1^+,A_2^+,\cdots,A_{H}^+)$ to reach the goal. 

We instantiate this task with a horizon of $H=10$ and evaluate two multilayer perceptron (MLP)-based agents that approach the problem from distinct modeling perspectives. The first network is Markov-state-based, receiving the encoded representation of the current Markov state $s_h$ as input. The second is action-sequence-based, whose input is the concatenation of all previous actions $(a_1, \dots, a_{h-1})$. Both agents are trained via Deep Q-Learning \citep{mnih2015human} to select the action $a_h$ that maximizes cumulative reward. We evaluate performance using two key metrics: the success rate in reaching the final goal state and the furthest state reached before the agent triggers an incorrect action. As shown in \cref{fig:combination-lock-exp-eval}, the Markov agent successfully memorizes the correct actions and stabilizes within 30k steps, while the action-sequence agent fails to reach the goal even after 800k steps. 

The substantial performance gap between the two paradigms is not surprising. For the Markov agent, the input space coincides with the state space and contains only $H$ distinct values while the action-sequence agent operates over an input space consisting of full action histories, suggesting that incorporating Markov states in the inputs is essential for solving this task efficiently.

\begin{figure*}[t]
    \centering
    \noindent
    \begin{minipage}{0.48\textwidth}
        \begin{algorithm}[H]
            \caption{Action-Sequence Models (\cref{fig:action-seq-lm})}
            \label{alg:action-seq}
            \begin{algorithmic}[1]
                \STATE \textbf{Input:} Initial state $s_1$
                \FOR{$h = 1, \dots, H$}
                    \STATE Sample action $a_h \sim \pi(\cdot \mid s_1, a_{1:h-1})$
                    \STATE Append $a_h$ to history sequence
                \ENDFOR
            \end{algorithmic}
        \end{algorithm}
    \end{minipage}
    \hfill
    \begin{minipage}{0.48\textwidth}
        \begin{algorithm}[H]
            \caption{Markovian Models (\cref{fig:markovian-lm})}
            \label{alg:markov-lm}
            \begin{algorithmic}[1]
                \STATE \textbf{Input:} Initial state $s_1$
                \FOR{$h = 1, \dots, H$}
                    \STATE Sample action $a_h \sim \pi(\cdot \mid {\color{blue}s_h})$
                    \STATE {\color{blue} Update state $s_{h+1} \leftarrow \mathcal{P}(s_h, a_h)$}
                \ENDFOR
            \end{algorithmic}
        \end{algorithm}
    \end{minipage}
\end{figure*}

\begin{figure*}[htbp]
     \centering
     \begin{subfigure}[c]{0.6\textwidth}
         \centering
         \includegraphics[width=\textwidth]{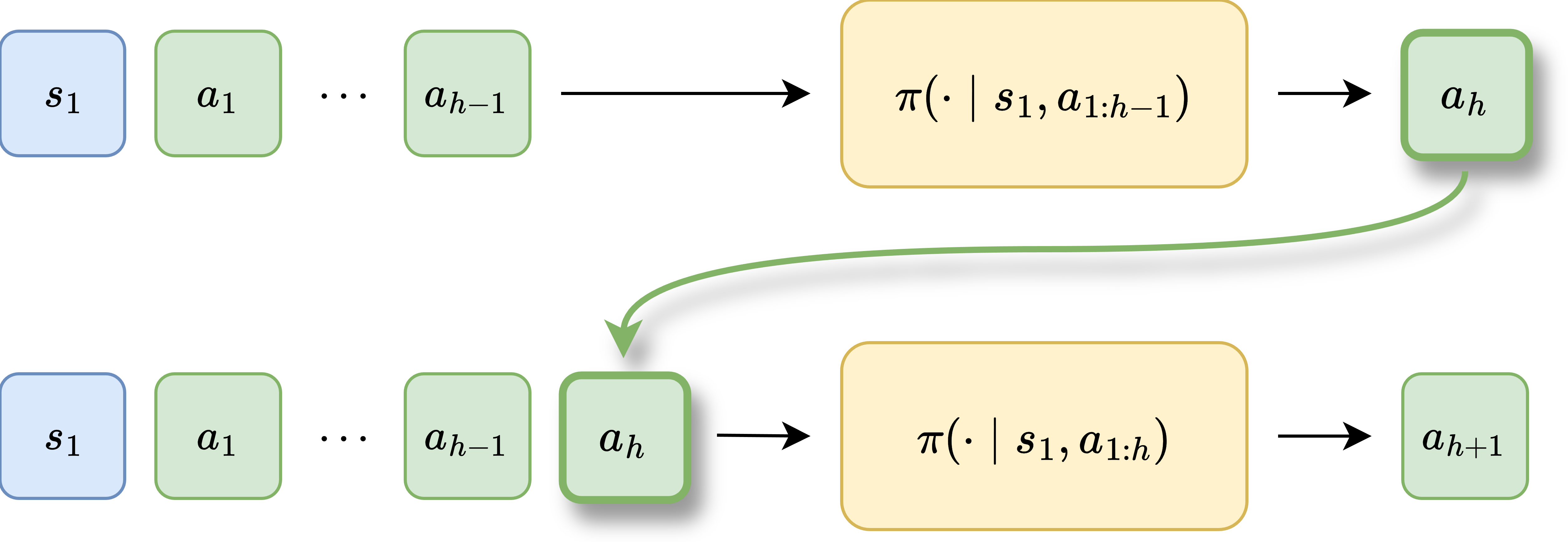}
         \caption{Action-Sequence Models. }
         \label{fig:action-seq-lm}
     \end{subfigure}
     \hfill %
     \begin{subfigure}[c]{0.35\textwidth}
         \centering
         \includegraphics[width=\textwidth]{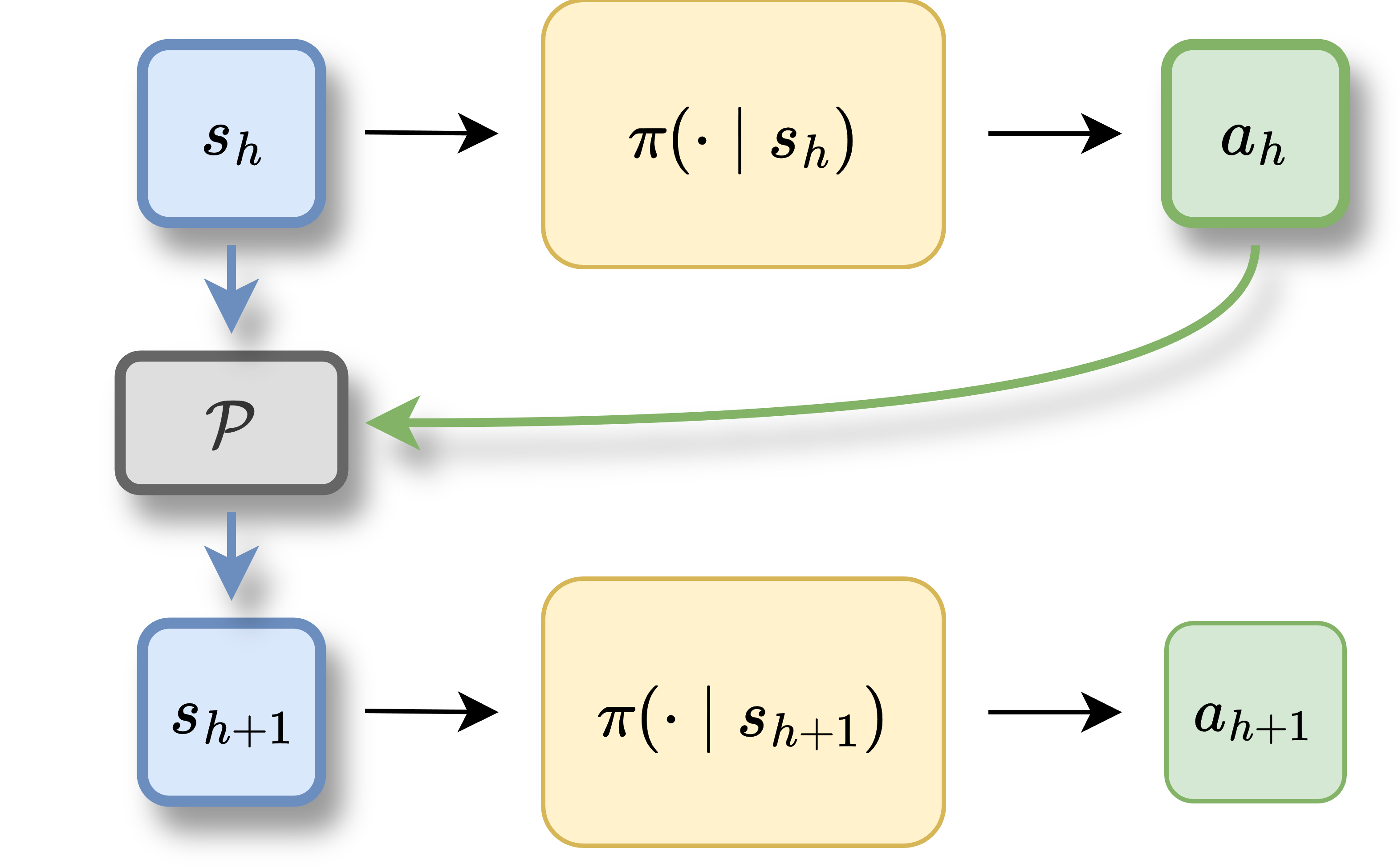}
         \caption{Markov Models. }
         \label{fig:markovian-lm}
     \end{subfigure}
     
     \caption{Comparison of action-sequence-based models and our Markovian Models. For action-sequence models, the new action ($a_h$) is appended to the existing action history and fed back into the model as the input for the subsequent prediction. For Markov models, the next action $a_h$ is combined with the current state \(s_h\) and passed through a state transition function $\mathcal{P}$ to obtain the next state $s_{h+1}$, which is the input for the subsequent prediction.}
     \label{fig:overall_label}
\end{figure*}

\subsection{Markov States in LLM Post-Training}

In contrast to the evidence in \cref{sec:combination-lock}, contemporary LLM post-training practices predominantly adopt an action-sequence-based formulation, where the history of actions is treated as the state, as shown in \cref{alg:action-seq}.
Here, an ``action'' is broadly defined: it may represent a single token, a semantic step toward the final solution, or an iteration of response-refinement.
The pronounced mismatch between prevailing post-training practices and our insights motivates us to rethink about the RL post-training paradigm. 

\paragraph{Markov State Estimation}
We reintroduce explicit Markov states into the LLM training pipeline. As illustrated in \cref{alg:markov-lm}, the newly generated action $a_h$, instead of being simply appended to the previous actions, is combined with the current state \(s_h\) and passed through a state transition function \(\Pcal\). The resulting state \(s_{h+1}\) is then used as the input for the next decision step. By construction, \(s_{h+1}\) preserves all information necessary for future actions while discarding irrelevant noise from the interaction history. In practice, the state transition function may be realized by an environment that internally maintains a Markov state, a rule-based mechanism implementing the transition logic, or a learned model that approximates the underlying transition dynamics. 

\begin{table}[ht]
    \centering
    \small
    \begin{threeparttable}
        \setlength{\tabcolsep}{8pt} %
        \begin{tabular}{lccc}
            \toprule
            \textbf{Approach} & \textbf{Sudoku} & \textbf{Sokoban} & \textbf{Futoshiki} \\
            \midrule
            \textbf{Qwen3-4B} & & & \\
            \textit{Action-seq.} & 92.3 & 2.5 & 0.2 \\
            \rowcolor{highlightbgcolor} \textit{Markov} & 97.1 & 76.1 & 75.0 \\
            \bottomrule
        \end{tabular}
        \caption{Comparison of action-sequence models and Markov models on logical reasoning tasks. Across all tasks, Markov models consistently outperform their action-sequence counterparts by a substantial margin. }
        \label{tab:experiment-highlights}
    \end{threeparttable}
\end{table}

\paragraph{Empirical Evidence}
To empirically validate the advantages of incorporating Markov states, we conduct experiments on a set of synthetic, controllable tasks with well-defined Markov state representations. In particular, we consider a suite of logical reasoning games, including Sudoku, Sokoban, and Futoshiki. For each task, we post-train models using both action-sequence-based and Markov paradigms with the same RL algorithm. We also train a separate state transition estimation model. As summarized in \cref{tab:experiment-highlights}, Markov models consistently outperform their action-sequence counterparts by a substantial margin, even on tasks where action-sequence models exhibit near-zero accuracy. 
We defer full experimental details and comprehensive evaluation results to \cref{sec:experiment}.

\paragraph{Broader Implications and Applications}
The applicability of Markovian language models extends well beyond synthetic benchmarks to a wide range of real-world settings. In many domains, the Markov states are \emph{accessible} during training and our paradigm can easily fit in. To illustrate this broader potential, we outline several representative scenarios:
\begin{inlineenum}
\item \textbf{Coding:} In multi-turn code debugging, the state represents a snapshot of the current codebase together with relevant execution or compiler logs, and evolves through actions such as code edits or test executions. In contrast, an action-sequence-based agent observes only its history of proposed changes, without explicitly reasoning over the resulting code snapshot. \citep{qwen3technicalreport,hui2024qwen2}.
\item \textbf{Mathematical reasoning:} The state consists of the set of established lemmas and intermediate results, with each new inference transitioning the system toward a more complete proof \citep{hubert2025olympiad,chen2025seed}.
\item \textbf{Iterative response refinement:} The state is restricted to the most recent draft, and the transition function $\mathcal{P}$ overwrites the previous version with the refined output. This design enables the model to reason over the current solution while avoiding redundant noise from its own edit history \citep{yuan2025reinforce}.
\end{inlineenum}
Standard action-sequence-based baselines ignore these Markovian structures, while our paradigm suggests that aligning the agent's representation with the efficient underlying Markov structure enables it to solve complex, long-horizon tasks that are otherwise intractable.

\section{Experiments}
\label{sec:experiment}

\begin{table*}[th!]
    \centering
    \small
    \setlength{\tabcolsep}{6.7pt}
    \begin{tabular}{l*{12}{c}} 
        \toprule
        \textbf{Approach} & \multicolumn{6}{c}{\textbf{In Distribution}} & \multicolumn{6}{c}{\textbf{Out Of Distribution}} \\
        \cmidrule(lr){2-7} \cmidrule(lr){8-13}
        & \multicolumn{2}{c}{\textbf{Sudoku}} & \multicolumn{2}{c}{\textbf{Sokoban}} & \multicolumn{2}{c}{\textbf{Futoshiki}} 
        & \multicolumn{2}{c}{\textbf{Sudoku}} & \multicolumn{2}{c}{\textbf{Sokoban}} & \multicolumn{2}{c}{\textbf{Futoshiki}} \\
        \cmidrule(lr){2-3} \cmidrule(lr){4-5} \cmidrule(lr){6-7} \cmidrule(lr){8-9} \cmidrule(lr){10-11} \cmidrule(lr){12-13}
        & \metricname{Avg} & \metricname{Pass} 
        & \metricname{Avg} & \metricname{Pass} 
        & \metricname{Avg} & \metricname{Pass} 
        & \metricname{Avg} & \metricname{Pass} 
        & \metricname{Avg} & \metricname{Pass} 
        & \metricname{Avg} & \metricname{Pass} \\
        \midrule
        \textbf{Qwen3-4B} & & & & & & & & & & & & \\
        \textit{Action-sequence} & & & & & & & & & & & & \\[-0.6ex]
        \smallrow{\hspace{0.8em} + SFT Warm-up} & \smallrow{16.1} & \smallrow{98.0} & \smallrow{0.2} & \smallrow{22.0} & \smallrow{0.0} & \smallrow{5.0} & \smallrow{3.1} & \smallrow{64.0} & \smallrow{0.0} & \smallrow{1.0} & \smallrow{0.0} & \smallrow{0.0} \\[-0.6ex]
        \rowcolor{lightgray} \hspace{0.5em} + RL post-training & 93.5 & 97.0 & 2.3 & 4.0 & 0.1 & 7.0 & 69.2 & 82.0 & 0.0 & 0.0 & 0.0 & 0.0 \\
        \textit{Markov} & & & & & & & & & & & & \\[-0.6ex]
        \smallrow{\hspace{0.8em} + SFT Warm-up} & \smallrow{34.2} & \smallrow{100.0} & \smallrow{0.4} & \smallrow{28.0} & \smallrow{0.1} & \smallrow{11.0} & \smallrow{8.7} & \smallrow{86.0} & \smallrow{0.0} & \smallrow{3.0} & \smallrow{0.0} & \smallrow{0.0} \\[-0.6ex]
        \rowcolor{highlightbgcolor} \hspace{0.5em} + RL post-training & 97.1 & 98.0 & 76.1 & 81.0 & 75.0 & 85.0 & 77.8 & 82.0 & 31.6 & 37.0 & 42.6 & 53.0 \\
        \textit{State-action-sequence} & & & & & & & & & & & & \\[-0.6ex]
        \smallrow{\hspace{0.8em} + SFT Warm-up} & \smallrow{8.6} & \smallrow{96.0} & \smallrow{1.4} & \smallrow{61.0} & \smallrow{0.3} & \smallrow{20.0} & \smallrow{1.9} & \smallrow{62.0} & \smallrow{0.2} & \smallrow{15.0} & \smallrow{0.0} & \smallrow{1.0} \\[-0.6ex]
        \rowcolor{lightgray} \hspace{0.5em} + RL post-training & 91.1 & 96.0 & 57.4 & 67.0 & 44.4 & 55.0 & 71.2 & 82.0 & 30.2 & 34.0 & 16.9 & 21.0 \\
        \midrule
        \textbf{Qwen2.5-3B-It} & & & & & & & & & & & & \\
        \textit{Action-sequence} & & & & & & & & & & & & \\[-0.6ex]
        \smallrow{\hspace{0.8em} + SFT Warm-up} & \smallrow{0.3} & \smallrow{23.0} & \smallrow{0.5} & \smallrow{37.0} & \smallrow{6.6} & \smallrow{94.0} & \smallrow{0.0} & \smallrow{0.0} & \smallrow{0.0} & \smallrow{3.0} & \smallrow{0.3} & \smallrow{28.0} \\[-0.6ex]
        \rowcolor{lightgray} \hspace{0.5em} + RL post-training & 0.0 & 0.0 & 1.0 & 1.0 & 61.3 & 84.0 & 0.0 & 0.0 & 0.0 & 0.0 & 24.9 & 56.0 \\
        \textit{Markov} & & & & & & & & & & & & \\[-0.6ex]
        \smallrow{\hspace{0.8em} + SFT Warm-up} & \smallrow{20.0} & \smallrow{99.0} & \smallrow{0.2} & \smallrow{14.0} & \smallrow{8.5} & \smallrow{98.0} & \smallrow{2.9} & \smallrow{75.0} & \smallrow{0.0} & \smallrow{1.0} & \smallrow{0.3} & \smallrow{26.0} \\[-0.6ex]
        \rowcolor{highlightbgcolor} \hspace{0.5em} + RL post-training & 86.0 & 94.0 & 89.7 & 93.0 & 79.8 & 96.0 & 56.4 & 68.0 & 66.9 & 72.0 & 28.3 & 67.0 \\
        \textit{State-action-sequence} & & & & & & & & & & & & \\[-0.6ex]
        \smallrow{\hspace{0.8em} + SFT Warm-up} & \smallrow{22.4} & \smallrow{100.0} & \smallrow{0.6} & \smallrow{41.0} & \smallrow{16.6} & \smallrow{100.0} & \smallrow{2.4} & \smallrow{71.0} & \smallrow{0.0} & \smallrow{1.0} & \smallrow{1.1} & \smallrow{60.0} \\[-0.6ex]
        \rowcolor{lightgray} \hspace{0.5em} + RL post-training & 83.0 & 90.0 & 43.6 & 50.0 & 67.4 & 94.0 & 57.1 & 69.0 & 20.4 & 23.0 & 25.2 & 75.0 \\
        \bottomrule
    \end{tabular}
    \caption{Performance comparison of different approaches. We sample $128$ solutions per question and report \textbf{Avg@128} and \textbf{Pass@128}. }
    \label{tab:markov-actseq}
\end{table*}
In this section, we report comprehensive experiments and analyses of Markovian learning and comparison to its counterparts.

\paragraph{Tasks and Datasets}
\label{par:exp-datasets}
To accurately obtain Markov states in LLM reasoning, we implement three synthetic logical tasks from Reasoning-Gym \citep{stojanovski2025reasoninggymreasoningenvironments}: Sudoku, Sokoban, and Futoshiki. These grid-based puzzles challenge a model’s capacity for logical analysis, constraint satisfaction, and spatial reasoning. Crucially, the configuration of the board at any given step serves as a fully observable Markov state; every discrete action deterministically updates this configuration to form the subsequent state, yielding an explicit state trajectory for training and analysis. 

\paragraph{Models and Training Pipelines}
\label{par:exp-impl-training}
We use $\pimkv$ to denote the Markov models and $\piactseq$ to denote the action-sequence models. 
\cref{sec:examples} provides illustrative examples of how models operate on these tasks. 

Experiments are conducted with Qwen3-4B \citep{qwen3technicalreport} and Qwen2.5-3B-Instruct \citep{qwen2.5}, training a separate model for each task. For each model, we first perform a brief task-specific SFT warm-start stage to establish task understanding and output formatting. We then post-train with GRPO \citep{shao2024deepseekmath} in an interactive setting, where the agent acts in the true environment  with ground-truth transition dynamics $\mathcal{P}^\star$. The agent receives a sparse terminal reward: $1$ for solving the instance and $0$ otherwise. In addition, we train a state prediction model $\pistatetrans$ based on Qwen2.5-3B-Instruct via SFT to predict the next state $\widehat s_{h+1}$ from the current state and action. At test time, $\pistatetrans$ replaces the environment $\mathcal{P}^\star$, enabling deployment without environment access.

\paragraph{Implementation Details}
\label{par:exp-impl-details}
We implement our methods and baselines in the rLLM framework \citep{rllm2025}, largely following the recommended hyperparameter settings. 

We intentionally require the models to output only the next action without chain-of-thought. This is because the base model is natively trained to solve these puzzles end-to-end; when allowed to reason step by step, it often behaves like an implicit transition model, forecasting future board states inside its reasoning trace (see \cref{app:disable-think-mode} for an illustration). This behavior undermines the goal of decomposing the problem into progressive steps. By constraining the model to outputting only the action, we delegate state-transition computation to an external state prediction model, ensuring that the policy conditions on an explicit next state rather than implicitly inferring it during generation.

\subsection{Main Results}

\newcommand{\passAtKEvaluationPlotSnippet}{\includegraphics[width=1\linewidth]{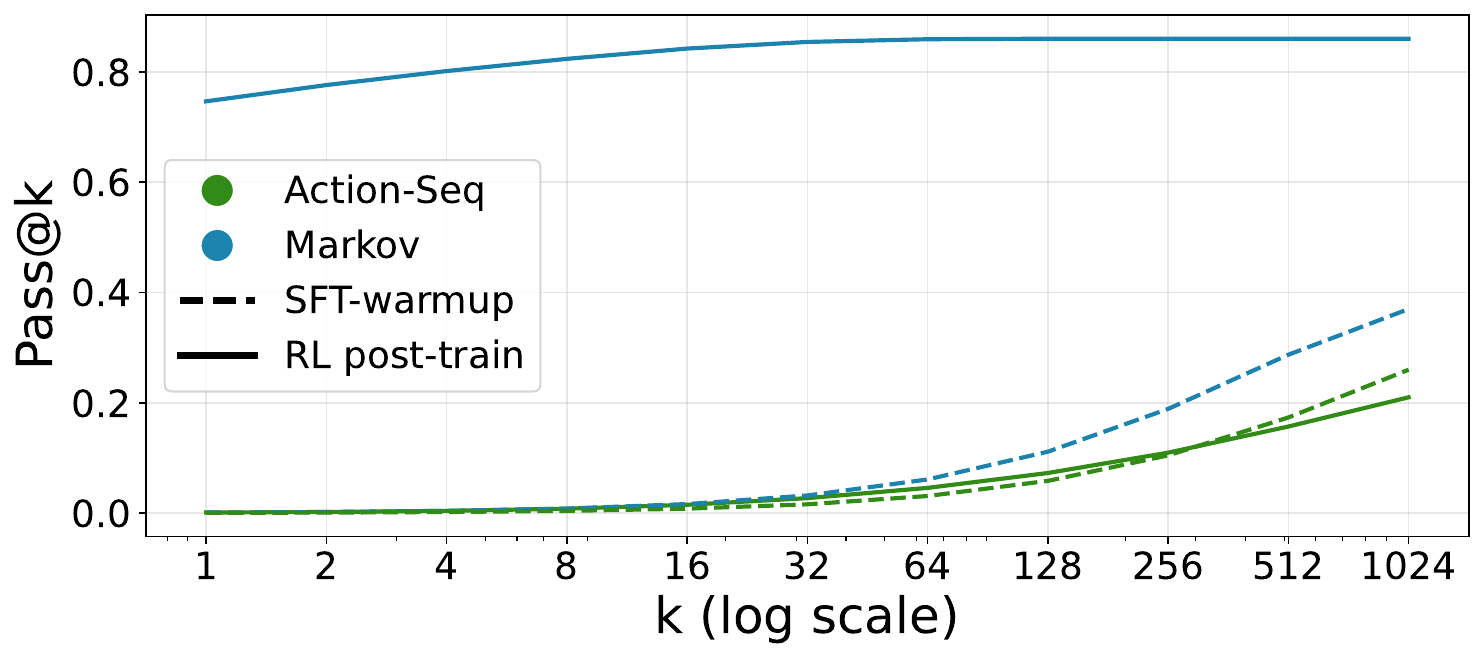}
    \caption{Pass@$k$ accuracy for Qwen3-4B-based models on Futoshiki. While action-sequence models rarely improve SFT Pass@$k$, Markov models consistently surpass their base models' limits.}
    \label{fig:pass-at-k}
}

\newcommand{\trainingRewardPlotSnippet}{
\includegraphics[width=1\linewidth]{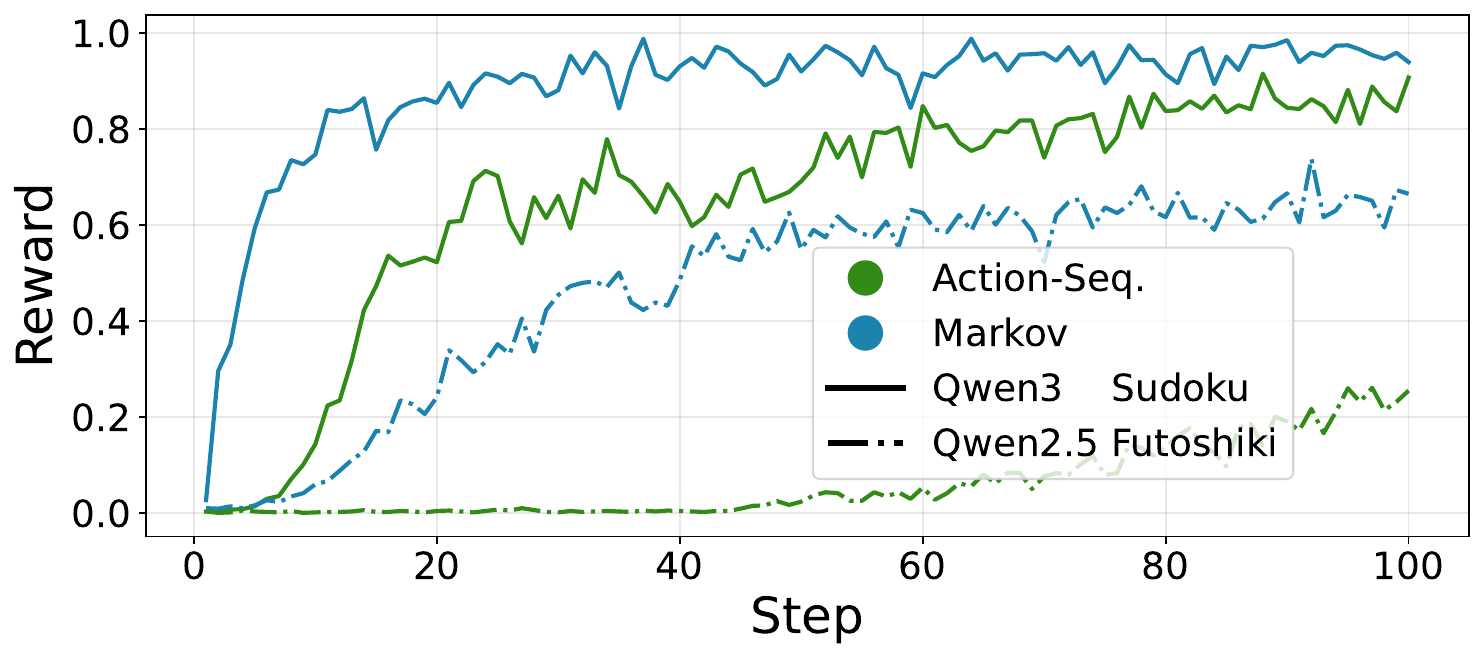}
    \caption{Training reward curves. Markov models reach higher rewards in fewer training steps, providing empirical evidence of lower sample complexity.}
    \label{fig:training-reward}
}

\begin{figure}[htbp]
    \begin{minipage}[t]{0.48\textwidth}
        \centering
        \passAtKEvaluationPlotSnippet
    \end{minipage}
    \hfill
    \begin{minipage}[t]{0.48\textwidth}
        \centering
        \trainingRewardPlotSnippet
    \end{minipage}
\end{figure}

We compare the performance of $\pimkv$ and $\piactseq$ in \cref{tab:markov-actseq} on two evaluation settings:
\begin{inlineenum}
    \item in-distribution (ID) tests, matched to the training set in difficulty and complexity, and 
    \item out-of-distribution (OOD) benchmarks, which are harder than training and typically require more decision steps, thereby probing generalization to greater reasoning depth. 
\end{inlineenum}
For each question, we sample $k=128$ solutions at temperature $1.0$ and report the arithmetic mean of success across all samples, denoted as \textbf{Avg@128}, and the probability that at least one of the $128$ samples is correct, denoted as \textbf{Pass@128}. 

Across both settings, $\pimkv$ consistently improves one-shot accuracy and Pass@$128$. The gains are most pronounced on challenging tasks where $\piactseq$ attains near-zero performance—for example, Sokoban and Futoshiki with Qwen3-4B, and Sudoku and Sokoban with Qwen2.5-3B-Instruct. On OOD benchmarks, Markov models generalize strongly, outperforming action-sequence models on every task.

We further examine how the models' capability boundaries shift by analyzing Pass@$k$ accuracy as $k$ scales. We use the results of Qwen3-4B on Futoshiki as an example and show it in \cref{fig:pass-at-k}. We find that action-sequence models fail to extend or even maintain the Pass@$k$ of SFT models, a result consistent with the findings from \citet{yue2025does}. In contrast, Markov models break through the capability boundaries of the base models, remarkably extending Pass@$k$. Evaluation on other tasks is deferred to \cref{sec:pass-k-full}.

We also observe faster training-time convergence for Markov models. Using Sudoku with Qwen3-4B and Futoshiki with Qwen2.5-3B-Instruct as representative cases, \cref{fig:training-reward} shows that $\pimkv$ reaches higher rewards in fewer training steps, providing empirical evidence of lower sample complexity.

\subsection{Factors Behind the Success of Markov Models}

To understand why $\pimkv$ consistently outperforms $\piactseq$, we decompose their differences into two factors:
\begin{inlineenum}
    \item explicit access to Markov states $s_h$, and
    \item the use of a Markovian structure. 
\end{inlineenum}
To isolate these effects, we introduce an intermediate baseline, the state-action-sequence model $\pistactseq$, which conditions on the full history of states and actions. Concretely, at step $h$, $\pistactseq$ predicts the next action $a_h$ given $(\{(s_i,a_i)\}_{i=1}^{h-1}, s_h)=(s_1, a_1, s_2, a_2, \cdots, a_{h-1}, s_h)$. Unlike $\piactseq$, $\pistactseq$ has access to the true state sequence; however, it still predicts actions from the entire trajectory and is therefore not Markovian. As a result, comparing $\piactseq$ to $\pistactseq$ quantifies the benefit of exposing Markov states, while comparing $\pistactseq$ to $\pimkv$ isolates the additional gains attributable to the Markov property.

In the following, we show that access to Markov states is essential for effective RL learning, and that enforcing a Markov decision structure further improves performance.

\paragraph{Explicit Markov State Conditioning}
The comparison of $\piactseq$ and $\pistactseq$ in \cref{tab:markov-actseq} shows that state-action-sequence models substantially improve logical-reasoning performance, breaking the performance ceiling of action-sequence-based RL training. By conditioning on an explicit state provided by an external transition model, it no longer needs to reconstruct the current board configuration implicitly in its latent space, offloading the burden of state tracking and prediction. During training, a clear state representation enables rewards to be correctly attributed to state-action pairs, rather than to entangled or partially inferred trajectories. At test time, using an explicit state reduces brittleness caused by noisy internal state estimates and makes the policy’s input unambiguous. 

\paragraph{Markovian Property}
Given access to Markov states, enforcing a Markovian policy structure further improves training efficiency. As shown in \cref{tab:markov-actseq}, Markov models consistently outperform state-action-sequence models. This empirical finding is in line with theoretical conclusion that Markov learning has exponentially lower sample complexity. 

While it may seem counterintuitive that state-action-sequence models perform worse despite having more input information, this extra context is often a liability rather than an asset. Under the Markov assumption, the current state has encoded all past information relevant to future decision-making. Consequently, the additional historical context provided to state-action-sequence models is redundant and may introduce spurious correlations that complicate learning. 
To further probe what the state-action-sequence model actually uses after training, we present a controlled ablation in \cref{sec:sas-model-depend-cur-state} showing that state-action-sequence models rely primarily on the current state: removing access to the current state causes performance to collapse, while retaining only the last state preserves a substantial fraction of accuracy.

\subsection{The RL Challenge Addressed by Markov Models}
\label{sec:exp-markov-generalization}
The challenges of RL can be largely summarized as the need to balance \textit{exploration} of unknown actions, \textit{credit assignment} under delayed and sparse rewards, and \textit{generalization} from limited experience to large or unseen state spaces \citep{sutton1998reinforcement,kaelbling1996reinforcement}. Among these challenges, generalization critically depends on how the agent represents and models the environment state. In this section, we will show that Markov models greatly improve generalization. 

To isolate generalization from other factors, we introduce two diagnostic variants: $\pimkv^{\Astar}$ and $\piactseq^{\Astar}$. They follow the same training pipeline as $\pimkv$ and $\piactseq$, except that
\begin{inlineenum}
    \item we remove the SFT warm-up stage and directly train on the base models, and
    \item we replace the estimated response-level advantage $\widehat A^\iter{i}$ used in GRPO objective (see \cref{sec:prelim-rl-llm}) with the ground-truth optimal advantage $\Astar$ defined in \cref{sec:mdp-plc-value}. 
\end{inlineenum}
Practically, $\Astar$ can be obtained using task-dependent rule-based algorithms, detailed in \cref{sec:prm-astar-comp}. 
In this way, the credit-assignment difficulty during training is largely removed through providing $\Astar$ and converting the original sparse reward into a dense, per-step learning signal. Exploration is also controlled: the final $\mathsf{softmax}$ layer of the language model functions as Boltzmann exploration mechanism, and we use the same sampling temperature across all models. Under these controls, remaining performance differences primarily reflect differences in generalization. 

As reported in \cref{tab:markov-astar}, $\pimkv^{\Astar}$ consistently outperforms $\piactseq^{\Astar}$ across models and tasks. The Markov property enables the model to treat distinct action histories as equivalent whenever they induce the same state, thereby promoting generalization to unseen problems. In contrast, action-sequence-based models must explicitly learn this equivalence; previously unseen action sequences may induce erroneous predictions even when they correspond to the same underlying Markov state. We defer the full results in \cref{sec:eval-generalization-full}. 

\begin{table}[th!]
    \centering
    \small
    \begin{threeparttable}
    \setlength{\tabcolsep}{5pt} %
    \begin{tabular}{lccc} 
        \toprule
        \textbf{Approach} & \textbf{Sudoku} & \textbf{Sokoban}\tnote{1} & \textbf{Futoshiki} \\
        \midrule
        \textbf{Qwen3-4B} & & & \\
        \hspace{0.5em} \textit{Action-seq. w/ $A^*$} & 90.8 & 18.2 & 54.8 \\
        \rowcolor{highlightbgcolor}\hspace{0.5em} \textit{Markov w/ $A^*$} & 97.8 & 33.0 & 64.8 \\
        \midrule
        \textbf{Qwen2.5-3B-It} & & & \\
        \hspace{0.5em} \textit{Action-seq. w/ $A^*$} & 28.9 & 0.2 & 44.1 \\
        \rowcolor{highlightbgcolor}\hspace{0.5em} \textit{Markov w/ $A^*$} & 83.4 & 94.2 & 54.2 \\
        \bottomrule
    \end{tabular}
    \begin{tablenotes}
        \footnotesize
        \item[1] Given the capability constraints of the base models, we use lower-complexity Sokoban tasks in this section. 
    \end{tablenotes}
    \caption{Performance comparison of $\pimkv^{\Astar}$, $\piactseq^{\Astar}$, and $\pistactseq^{\Astar}$ on in-distribution benchmarks.}
    \label{tab:markov-astar}
    \end{threeparttable}
\end{table}

\subsection{Additional Results}
Beyond our main results, we provide additional analyses in \cref{sec:addition-results}. We report training-time reward curves demonstrating faster convergence of Markov models (\cref{sec:train-success-rate}). We also conduct an ablation on the fraction of SFT steps used for policy initialization, showing that Markov models achieve higher rewards with fewer SFT steps (\cref{sec:ablation-sft-warmup-degree}). Finally, we examine the role of Markov states in purely supervised settings and find that the Markov property plays a limited role under SFT alone (\cref{sec:markov-pure-sft}).

\section{Theoretical Analysis}
\label{sec:theory}
In this section, we rigorously prove that the introduction of Markov states enables the models to achieving higher performance with lower sample complexity. 
Throughout this section, we assume the underlying environment is deterministic, governed by a ground-truth transition function $s_{h+1} = \Pcal(s_h,a_h)$.\footnote{This assumption naturally holds for many reasoning tasks, including the logical puzzles in our experiments as well as settings like code editing and mathematical proof construction, where actions deterministically update the problem state.} We also assume the reward is bounded $r_h \in [0, 1]$ for all $h$.
Two learning paradigms are formalized as below.
\begin{enumerate}[(1)]
    \item \textit{Action-sequence-based learning}: The policy conditions on the initial state and the action history: $a_h \sim \pi(\cdot \mid s_1, a_{1:h-1})$, where $a_{1:h-1}$ is a shorthand notation for the sequence $a_1, a_2, \cdots,a_{h-1}$. 
    \item \textit{Approximate Markovian learning}: The agent has access to an approximate transition function $\widehat\Pcal: \Scal \times \Acal \to \Scal$ that estimates the true dynamics $\Pcal^\star$. At each step $h$, the agent observes the approximate Markov state $\widehat{s}_h \leftarrow \widehat\Pcal(\widehat s_{h-1},a_{h-1})$ (when $h=2$, $\widehat s_1\coloneqq s_1$), and selects action $a_h \sim \pi(\cdot \mid \widehat{s}_h)$. 
    This setting captures practical scenarios where Markov states are approximately recovered via a learned world model or rule-based state extractor.
\end{enumerate}

We consider a general policy optimization framework applicable to both learning paradigms. Specifically, for each $t$ in $\{1, 2, \dotsc,T\}$, the learning algorithm updates the current policy $\pi^\iter{t}$ using an approximated advantage $\widehat A^{\pi^\iter{t}}$, yielding $\pi^\iter{t+1}$. This algorithmic framework is sketched in \cref{alg:general_policy_optimization}. We make the following assumption on the resulting policies. 

\begin{assumption}[Optimization error]
    For action-sequence-based models, 
    \flexequation{\frac{1}{T}\sum_{t=1}^{T} \sum_{h=1}^{H} \E_{\pi^\star}\left[\widehat A^{\pi^\iter{t}}(s_1,a_{1:h})\right] \leq H\varepsilon_{\sf opt}}[][;][;][]
    and for approximate Markovian learning, 
    \flexequation{\frac{1}{T}\sum_{t=1}^{T} \sum_{h=1}^{H} \E_{\pi^\star}\left[\widehat A^{\pi^\iter{t}}(\widehat s_h,a_h)\right] \leq H\varepsilon_{\sf opt}}[][.][.][]
\label[assumption]{asm:opt-error}
\end{assumption}

As detailed in \cref{sec:opt-error-asm-discussin}, when PPO-style algorithms are employed, the optimization error satisfies $\varepsilon_{\sf opt} =\Ocal\left(\sqrt{1/T}\right)$ . 

We next bound the error of the estimated advantage function $\widehat A^{\pi^\iter{t}}(x_h,a_h)$, as stated in \cref{asm:def-eps-stat}. 

\begin{assumption}[Advantage estimation error]
    We assume \flexequation{\E_{\pi^\iter{t}}\left[\left(\widehat A^{\pi^\iter{t}}(x_h,a_h) - A^{\pi^\iter{t}}(x_h,a_h)\right)^2\right] \leq \varepsilon_{\sf stat}^2}
    for all $h$, where $x_h = (s_1,a_1,\dotsc,a_{h-1})$ for the action-sequence-based learning and $x_h = \widehat s_h$ for approximate Markovian learning. 
\label[assumption]{asm:def-eps-stat}
\end{assumption}

We further define the state-action occupancy as $d^\pi_h(s,a)=\E_{\tau \sim \pi}\left[ \mathbb{I}\left[ s_h=s,a_h=a\right]\right]$ and overload the notation for action-sequence occupancy: $d^\pi_h(s'_1,a'_{1:h})=\E_{\tau \sim \pi}\left[\mathbb{I}\left[ s_1=s'_1,a_{1:h}=a'_{1:h} \right]\right]$. Under these assumptions, we derive the performance guarantee for action-sequence-based learning in \cref{prop:non-markov}. 
\begin{proposition}[Performance guarantee of action-sequence-based learning]
\label[proposition]{prop:non-markov}
For the paradigm of action-sequence-based learning, suppose \cref{asm:opt-error} and \cref{asm:def-eps-stat} hold, then we have 
\flexequation{J(\pi^\star) - \max_t J(\pi^\iter{t}) \leq H \varepsilon_{\sf opt} + H \sqrt{\max_{t,h} \E_{\pi^\iter{t}}\left[ \left( \frac{d_h^{\pi^\star}(s_1,a_{1:h})}{d_h^{\pi^\iter{t}}(s_1,a_{1:h})} \right)^2 \right]}\varepsilon_{\sf stat}}[][.][.][]
\end{proposition}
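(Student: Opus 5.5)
The argument has three steps: apply the performance difference lemma on the action-sequence MDP, split the true advantage into the estimated advantage plus an estimation error, and control the error term by a change of measure with Cauchy--Schwarz.

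\textbf{Step 1: the lifted MDP.} Because the dynamics are deterministic, the pair $(s_1,a_{1:h-1})$ determines $s_h$. We therefore work in the ``history MDP'' whose state at step $h$ is $x_h=(s_1,a_{1:h-1})$. This is a genuine MDP. Every action-sequence policy is Markov in it, and $\pi^\star$ (a Markov policy on the true states) is also a policy on it. For each $t$, the performance difference lemma gives
\begin{align*}
J(\pi^\star)-J(\pi^\iter{t}) = \sum_{h=1}^H \E_{\pi^\star}\left[A^{\pi^\iter{t}}(s_1,a_{1:h})\right].
\end{align*}

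\textbf{Step 2: add and subtract the estimate.} Writing $A^{\pi^\iter{t}}=\widehat A^{\pi^\iter{t}}+(A^{\pi^\iter{t}}-\widehat A^{\pi^\iter{t}})$, we average over $t$ and use $\max_t J(\pi^\iter{t})\ge \frac1T\sum_t J(\pi^\iter{t})$. By \cref{asm:opt-error}, the estimated-advantage part is at most $H\varepsilon_{\sf opt}$.

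\textbf{Step 3: the estimation-error term.} For each $t,h$ it remains to bound $\E_{\pi^\star}[A^{\pi^\iter{t}}-\widehat A^{\pi^\iter{t}}]$ evaluated at $(s_1,a_{1:h})$. We first rewrite it under $\pi^\iter{t}$ by importance weighting with the occupancy ratio $w=d_h^{\pi^\star}(s_1,a_{1:h})/d_h^{\pi^\iter{t}}(s_1,a_{1:h})$. Cauchy--Schwarz then gives
\begin{align*}
\E_{\pi^\star}\left[A^{\pi^\iter{t}}-\widehat A^{\pi^\iter{t}}\right] \le \sqrt{\E_{\pi^\iter{t}}[w^2]}\,\sqrt{\E_{\pi^\iter{t}}\left[(\widehat A^{\pi^\iter{t}}-A^{\pi^\iter{t}})^2\right]}.
\end{align*}
By \cref{asm:def-eps-stat}, the second factor is at most $\varepsilon_{\sf stat}$. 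We bound the first factor by its maximum over $(t,h)$, and summing over $h$ produces the stated factor $H$. Averaging over $t$ completes the proof.

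\textbf{Main obstacle.} The delicate point is justifying the change of measure in Step 3. It needs absolute continuity: wherever $d_h^{\pi^\star}>0$, we must have $d_h^{\pi^\iter{t}}>0$; otherwise the ratio is infinite and the bound is vacuous but still true. We also need the advantage to be a function of $(s_1,a_{1:h})$ alone, and this is exactly where determinism is used. Without determinism the history would not pin down the true state, and the performance difference lemma on histories would need care. The second-moment form $\E_{\pi^\iter{t}}[w^2]$ should also be checked against the statement, and it matches the displayed bound.
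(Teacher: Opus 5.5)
Your proposal is correct and follows the paper's proof essentially step for step. You apply the performance difference lemma with $(s_1,a_{1:h-1})$ as the state, split $A^{\pi^{(t)}}$ into $\widehat A^{\pi^{(t)}}$ plus the estimation error, bound the first part by the optimization-error assumption, and bound the second by an occupancy-ratio change of measure, Cauchy--Schwarz, and the $\varepsilon_{\sf stat}$ assumption; you also make explicit the step $\max_t J(\pi^{(t)}) \ge \frac{1}{T}\sum_t J(\pi^{(t)})$, which the paper uses only implicitly.
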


For approximate Markovian learning, we additionally assume the accuracy of the learned transition model, formalized in \cref{asm:state-transition-accuracy}, and derive the performance guarantee in \cref{prop:app-markov}. 
\begin{assumption}[State transition model accuracy]
    we assume approximate transition model satisfies 
    \flexequation{\Pr\left[\widehat\Pcal(s,a) \neq \Pcal(s,a)\right] \leq \varepsilon_{\Pcal}}
    for all $s\in\Scal,a\in \Acal$. 
    \label[assumption]{asm:state-transition-accuracy}
\end{assumption}

\begin{proposition}[Performance guarantee of approximate Markovian
learning]
\label[proposition]{prop:app-markov}
For the paradigm of approximate Markovian learning, suppose \cref{asm:opt-error}, \cref{asm:def-eps-stat}, and \cref{asm:state-transition-accuracy} hold, then we have 
\flexequation{J(\pi^\star) - \max_t J(\pi^\iter{t}) \leq H \varepsilon_{\sf opt} + H \sqrt{\max_{t,h} \E_{\pi^\iter{t}}\left[ \left( \frac{d_h^{\pi^\star}(s_h,a_h)}{d_h^{\pi^\iter{t}}(s_h,a_h)} \right)^2 \right]}\varepsilon_{\sf stat} + 2 H^3 \varepsilon_{\Pcal}}[][.][.][]
\end{proposition}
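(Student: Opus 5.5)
We will follow the same route as for \cref{prop:non-markov}, with one extra step that separates out the transition-model error. Concretely, we compare three processes:
\begin{itemize}
\item the true MDP, with states $s_h$;
\item the approximate process, in which a policy acting on $\widehat s_h$ generates trajectories through $\widehat\Pcal$;
\item the coupling of the two.
\end{itemize}
Let $E$ be the event that $\widehat\Pcal(s_h,a_h)=\Pcal(s_h,a_h)$ at every step along a trajectory. On $E$ we have $\widehat s_h=s_h$ for all $h$. By \cref{asm:state-transition-accuracy} and a union bound over the $H$ steps, $\Pr[E^c]\le H\varepsilon_{\Pcal}$ under any policy.

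\textbf{Step 1 (performance difference in the approximate process).} Define the approximate MDP $\widehat M$ with dynamics $\widehat\Pcal$. Within $\widehat M$, Markovian policies on $\widehat s_h$ are ordinary policies, so the performance difference lemma applies:
\[
\widehat J(\pi^\star)-\widehat J(\pi^\iter{t})=\sum_h \E_{\pi^\star}\bigl[A^{\pi^\iter{t}}(\widehat s_h,a_h)\bigr].
\]
Here $\pi^\star$ is read as acting on $\widehat s_h$. We then write $A=\widehat A+(A-\widehat A)$ and average over $t$. The $\widehat A$ part is at most $H\varepsilon_{\sf opt}$ by \cref{asm:opt-error}. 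For the error part we change measure from $\pi^\star$ to $\pi^\iter{t}$ and apply Cauchy--Schwarz:
\[
\E_{\pi^\star}[A-\widehat A]\le\sqrt{\E_{\pi^\iter{t}}[(d^{\pi^\star}_h/d^{\pi^\iter{t}}_h)^2]}\,\varepsilon_{\sf stat},
\]
using \cref{asm:def-eps-stat}. Since a maximum exceeds an average, this gives the first two terms for $\max_t \widehat J(\pi^\iter{t})$.

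\textbf{Step 2 (transfer between true and approximate returns).} For any policy, the trajectories in the true and approximate processes coincide on $E$. Returns lie in $[0,H]$, so $|J(\pi)-\widehat J(\pi)|\le H\cdot H\varepsilon_{\Pcal}=H^2\varepsilon_{\Pcal}$. Applying this once to $\pi^\star$ and once to $\pi^\iter{t}$ gives $2H^2\varepsilon_{\Pcal}$, which is within the stated $2H^3\varepsilon_{\Pcal}$.

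Alternatively, one can do a per-step simulation-lemma accounting. There each step contributes at most $H\cdot\Pr[\text{first divergence at }h]$ summed over the trajectory, weighted by the $H$ remaining horizon. This cruder bound produces $H^3$ and matches the stated constant, so either accounting suffices.

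\textbf{Step 3 (reconciling the occupancy measures).} The main obstacle is that the concentrability term in the statement uses the true occupancies $d_h(s_h,a_h)$, whereas Step 1 naturally produces occupancies over $\widehat s_h$. We would handle this by again splitting on $E$. On $E$ the two occupancies agree, because $\widehat s_h=s_h$. Off $E$, the contribution of $A-\widehat A$ is at most a constant times $H\Pr[E^c]$ per step, since advantages are bounded by $H$. Summed over $h$, this adds at most $O(H^3\varepsilon_{\Pcal})$, which is absorbed into the $2H^3\varepsilon_{\Pcal}$ slack.

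If exact constants become awkward, we would instead apply the performance difference lemma directly in the true MDP. We would treat $\pi^\iter{t}(\cdot\mid\widehat s_h)$ as a history-dependent policy that coincides with a Markov policy on $E$, and charge every discrepancy to $\Pr[E^c]$. This route yields the same three terms.
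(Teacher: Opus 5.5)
Your Steps 1--2 are sound if everything is read inside $\widehat M$, with $\pi^\star$ acting on $\widehat s_h$ and both assumptions interpreted in $\widehat M$. But the bound they give uses the concentrability coefficient of $\widehat M$'s occupancies over $(\widehat s_h,a_h)$, not the true-state coefficient in the statement. Step 3, which is meant to convert one into the other, has a genuine gap.

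\begin{itemize}
\item The claim that off $E$ the contribution of $A-\widehat A$ is $O(H\Pr[E^c])$ ``since advantages are bounded by $H$'' needs a bound on the \emph{estimate} $\widehat A^{\pi^{(t)}}$, and nothing supplies one. \cref{asm:def-eps-stat} only controls its mean-squared error under $\pi^{(t)}$. The corrupted states $\widehat s_h$ that $\pi^\star$ reaches on $E^c$ may be ones $\pi^{(t)}$ essentially never produces, and there $\widehat A$ is unconstrained.
\item ``On $E$ the occupancies agree'' does not make the two coefficients comparable, because a $\chi^2$-type ratio is not stable under an $O(H\varepsilon_{\Pcal})$ perturbation of the measures. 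A corrupted state at which $\pi^\star$ and $\pi^{(t)}$ choose different actions can add an arbitrarily large term to the $\widehat M$-coefficient while leaving the true one essentially unchanged. So the correction is not additive and cannot be absorbed into $2H^3\varepsilon_{\Pcal}$.
\item If the advantage in the assumptions is the true $A^{\pi^{(t)}}$ (as in the paper, which evaluates one function at both $s_h$ and $\widehat s_h$), then the lemma in $\widehat M$ produces the wrong advantage. Converting costs $O(H^2\varepsilon_{\Pcal})$ per step, on the order of the whole $2H^3\varepsilon_{\Pcal}$ budget, so the slack you rely on is not there.
\item Your closing fallback has the same defect: ``charging every discrepancy to $\Pr[E^c]$'' again presupposes a bounded $\widehat A$.
\end{itemize}

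The missing idea is the order of the decomposition in the paper's proof. Apply the performance difference lemma in the true MDP, so $\pi^\star$ acts on $s_h$ and $\widehat s_h$ is merely computed along its trajectory. Then add and subtract $A^{\pi^{(t)}}(\widehat s_h,a_h)$ and $\widehat A^{\pi^{(t)}}(\widehat s_h,a_h)$.

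\begin{itemize}
\item The transition error now enters only through $A^{\pi^{(t)}}(s_h,a_h)-A^{\pi^{(t)}}(\widehat s_h,a_h)$, a difference of \emph{true} advantages. It is at most $2H$ and vanishes unless $s_h\neq\widehat s_h$, an event of probability at most $H\varepsilon_{\Pcal}$. This gives $2H^2\varepsilon_{\Pcal}$ per step, which is exactly where $2H^3\varepsilon_{\Pcal}$ comes from.
\item The estimation term $\E_{\pi^\star}[(A^{\pi^{(t)}}-\widehat A^{\pi^{(t)}})(\widehat s_h,a_h)]$ is handled with no event $E$ and no $\widehat M$. Condition on the true pair $(s_h,a_h)$, reweight by $d_h^{\pi^\star}(s_h,a_h)/d_h^{\pi^{(t)}}(s_h,a_h)$, and apply Cauchy--Schwarz. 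Then use Jensen to pass from the squared conditional mean to $\E_{\pi^{(t)}}[(A^{\pi^{(t)}}-\widehat A^{\pi^{(t)}})^2(\widehat s_h,a_h)]\le\varepsilon_{\sf stat}^2$. This yields the true-state coefficient directly.
\end{itemize}

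That Jensen step tacitly takes the conditional law of $\widehat s_h$ given $(s_h,a_h)$ to be the same under $\pi^\star$ and $\pi^{(t)}$. So your sense that something must control $\widehat A$ off $E$ points at a genuine subtlety. But it must be imposed as an explicit extra assumption, not inferred from the boundedness of the true advantage.
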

The proof of \cref{prop:non-markov} and \cref{prop:app-markov} is deferred to \cref{sec:proof}. 

Comparing \cref{prop:non-markov} and \cref{prop:app-markov} highlights the key benefit of introducing an approximate transition model. 
In the action-sequence-based learning (\cref{prop:non-markov}), the density ratio $\frac{d_h^{\pi^\star}(s_1,a_{1:h})}{d_h^{\pi^\iter{t}}(s_1,a_{1:h})}$ is defined over full action histories, whose space grows exponentially with the horizon $H$. Bounding this ratio in the worst case requires the learning policy $\pi^\iter{t}$ to cover an exponentially large history space, leading to prohibitive sample complexity (essentially scaling as $|\Acal|^H$). This is consistent with the computational lower bound of \citet{foster2025good}: the coverage coefficient $C_{\sf cov}(\pi^\star_\beta)$ captures the same fundamental difficulty of covering the optimal policy over an exponentially large response space.

In contrast, the bound for approximate Markovian learning (\cref{prop:app-markov}) effectively depends on the density ratio of the \emph{true} underlying Markov states: $\frac{d_h^{\pi^\star}(s_h, a_h)}{d_h^{\pi^\iter{t}}(s_h, a_h)}$, despite the agent operating on approximate states $\widehat s_h$. Provided that the true task has a compact state structure (e.g., polynomial in $H$), this ratio is much easier to bound, implying an exponential reduction in variance. \cref{sec:theory-comblock-comparison} illustrates this distinction using \emph{Combination Lock} as an example task. 

This improvement comes at the cost of an additive bias term $\Ocal\left(H^3 \varepsilon_{\Pcal}\right)$, which is polynomial in the horizon and controlled by the transition model accuracy. This presents a favorable trade-off: \emph{by paying a polynomial price for approximate state transitions, we avoid the potentially exponential coverage cost inherent in action-sequence-based learning.} This is particularly significant in light of the computational barrier established by \citet{foster2025good}, where action-sequence RL is forced to pay either the coverage cost $C_{\sf cov}(\pi^\star_\beta)$ or the exponential exploration cost $e^{R_{\sf max}/\beta}$---introducing Markov states sidesteps this bottleneck by reducing the effective coverage requirement from action histories to the compact state space.

\section{Conclusion}
In this work, we reintroduce Markov states into LLM post-training and demonstrate their potential to overcome the performance plateau of contemporary post-training paradigms. We hope this perspective motivates future work to incorporate Markovian structure into real-world, complex tasks, paving the way toward more scalable and open-ended capability growth in generative AI.

\section*{Acknowledgements}
We acknowledge support of the DARPA AIQ Award. This work used the DeltaAI system at the National Center for Supercomputing Applications [award OAC 2320345] through allocation CIS251426 from the Advanced Cyberinfrastructure Coordination Ecosystem: Services \& Support (ACCESS) program, which is supported by U.S.~National Science Foundation grants {\sf\#}2138259, {\sf\#}2138286, {\sf\#}2138307, {\sf\#}2137603, and {\sf\#}2138296.

\bibliographystyle{plainnat}
\bibliography{ref}

@article{guo2025deepseek,
  title={Deepseek-r1: Incentivizing reasoning capability in llms via reinforcement learning},
  author={Guo, Daya and Yang, Dejian and Zhang, Haowei and Song, Junxiao and Zhang, Ruoyu and Xu, Runxin and Zhu, Qihao and Ma, Shirong and Wang, Peiyi and Bi, Xiao and others},
  journal={arXiv preprint arXiv:2501.12948},
  year={2025}
}

@article{jaech2024openai,
  title={Openai o1 system card},
  author={Jaech, Aaron and Kalai, Adam and Lerer, Adam and Richardson, Adam and El-Kishky, Ahmed and Low, Aiden and Helyar, Alec and Madry, Aleksander and Beutel, Alex and Carney, Alex and others},
  journal={arXiv preprint arXiv:2412.16720},
  year={2024}
}

@article{yue2025does,
  title={Does reinforcement learning really incentivize reasoning capacity in llms beyond the base model?},
  author={Yue, Yang and Chen, Zhiqi and Lu, Rui and Zhao, Andrew and Wang, Zhaokai and Song, Shiji and Huang, Gao},
  journal={arXiv preprint arXiv:2504.13837},
  year={2025}
}

@article{wu2025invisible,
  title={The Invisible Leash: Why RLVR May or May Not Escape Its Origin},
  author={Wu, Fang and Xuan, Weihao and Lu, Ximing and Liu, Mingjie and Dong, Yi and Harchaoui, Zaid and Choi, Yejin},
  journal={arXiv preprint arXiv:2507.14843},
  year={2025}
}

@article{shao2025spurious,
  title={Spurious rewards: Rethinking training signals in rlvr},
  author={Shao, Rulin and Li, Shuyue Stella and Xin, Rui and Geng, Scott and Wang, Yiping and Oh, Sewoong and Du, Simon Shaolei and Lambert, Nathan and Min, Sewon and Krishna, Ranjay and others},
  journal={arXiv preprint arXiv:2506.10947},
  year={2025}
}

@article{yeo2025demystifying,
  title={Demystifying long chain-of-thought reasoning in llms},
  author={Yeo, Edward and Tong, Yuxuan and Niu, Morry and Neubig, Graham and Yue, Xiang},
  journal={arXiv preprint arXiv:2502.03373},
  year={2025}
}

@article{yuan2025f,
  title={From $ f (x) $ and $ g (x) $ to $ f (g (x)) $: LLMs Learn New Skills in RL by Composing Old Ones},
  author={Yuan, Lifan and Chen, Weize and Zhang, Yuchen and Cui, Ganqu and Wang, Hanbin and You, Ziming and Ding, Ning and Liu, Zhiyuan and Sun, Maosong and Peng, Hao},
  journal={arXiv preprint arXiv:2509.25123},
  year={2025}
}

@article{sun2025rl,
  title={RL Grokking Recipe: How Does RL Unlock and Transfer New Algorithms in LLMs?},
  author={Sun, Yiyou and Cao, Yuhan and Huang, Pohao and Bai, Haoyue and Hajishirzi, Hannaneh and Dziri, Nouha and Song, Dawn},
  journal={arXiv preprint arXiv:2509.21016},
  year={2025}
}

@article{zhang2025interplay,
  title={On the Interplay of Pre-Training, Mid-Training, and RL on Reasoning Language Models},
  author={Zhang, Charlie and Neubig, Graham and Yue, Xiang},
  journal={arXiv preprint arXiv:2512.07783},
  year={2025}
}

@article{schulman2017proximal,
  title={Proximal policy optimization algorithms},
  author={Schulman, John and Wolski, Filip and Dhariwal, Prafulla and Radford, Alec and Klimov, Oleg},
  journal={arXiv preprint arXiv:1707.06347},
  year={2017}
}

@article{prabhudesai2025maximizing,
  title={Maximizing Confidence Alone Improves Reasoning},
  author={Prabhudesai, Mihir and Chen, Lili and Ippoliti, Alex and Fragkiadaki, Katerina and Liu, Hao and Pathak, Deepak},
  journal={arXiv preprint arXiv:2505.22660},
  year={2025}
}

@article{zuo2025ttrl,
  title={Ttrl: Test-time reinforcement learning},
  author={Zuo, Yuxin and Zhang, Kaiyan and Sheng, Li and Qu, Shang and Cui, Ganqu and Zhu, Xuekai and Li, Haozhan and Zhang, Yuchen and Long, Xinwei and Hua, Ermo and others},
  journal={arXiv preprint arXiv:2504.16084},
  year={2025}
}

@article{schulman2015high,
  title={High-dimensional continuous control using generalized advantage estimation},
  author={Schulman, John and Moritz, Philipp and Levine, Sergey and Jordan, Michael and Abbeel, Pieter},
  journal={arXiv preprint arXiv:1506.02438},
  year={2015}
}

@article{silver2017mastering,
  title={Mastering chess and shogi by self-play with a general reinforcement learning algorithm},
  author={Silver, David and Hubert, Thomas and Schrittwieser, Julian and Antonoglou, Ioannis and Lai, Matthew and Guez, Arthur and Lanctot, Marc and Sifre, Laurent and Kumaran, Dharshan and Graepel, Thore and others},
  journal={arXiv preprint arXiv:1712.01815},
  year={2017}
}

@article{schrittwieser2020mastering,
  title={Mastering atari, go, chess and shogi by planning with a learned model},
  author={Schrittwieser, Julian and Antonoglou, Ioannis and Hubert, Thomas and Simonyan, Karen and Sifre, Laurent and Schmitt, Simon and Guez, Arthur and Lockhart, Edward and Hassabis, Demis and Graepel, Thore and others},
  journal={Nature},
  volume={588},
  number={7839},
  pages={604--609},
  year={2020},
  publisher={Nature Publishing Group UK London}
}

@misc{stojanovski2025reasoninggymreasoningenvironments,
      title={REASONING GYM: Reasoning Environments for Reinforcement Learning with Verifiable Rewards},
      author={Zafir Stojanovski and Oliver Stanley and Joe Sharratt and Richard Jones and Abdulhakeem Adefioye and Jean Kaddour and Andreas Köpf},
      year={2025},
      eprint={2505.24760},
      archivePrefix={arXiv},
      primaryClass={cs.LG},
      url={https://arxiv.org/abs/2505.24760},
}

@misc{qwen3technicalreport,
      title={Qwen3 Technical Report}, 
      author={Qwen Team},
      year={2025},
      eprint={2505.09388},
      archivePrefix={arXiv},
      primaryClass={cs.CL},
      url={https://arxiv.org/abs/2505.09388}, 
}

@misc{qwen2.5,
    title = {Qwen2.5: A Party of Foundation Models},
    url = {https://qwenlm.github.io/blog/qwen2.5/},
    author = {Qwen Team},
    month = {September},
    year = {2024}
}

@misc{rllm2025,
  title={rLLM: A Framework for Post-Training Language Agents},
  author={Sijun Tan and Michael Luo and Colin Cai and Tarun Venkat and Kyle Montgomery and Aaron Hao and Tianhao Wu and Arnav Balyan and Manan Roongta and Chenguang Wang and Li Erran Li and Raluca Ada Popa and Ion Stoica},
  year={2025},
  howpublished={\url{https://pretty-radio-b75.notion.site/rLLM-A-Framework-for-Post-Training-Language-Agents-21b81902c146819db63cd98a54ba5f31}},
  note={Notion Blog}
}

@article{shao2024deepseekmath,
  title={Deepseekmath: Pushing the limits of mathematical reasoning in open language models},
  author={Shao, Zhihong and Wang, Peiyi and Zhu, Qihao and Xu, Runxin and Song, Junxiao and Bi, Xiao and Zhang, Haowei and Zhang, Mingchuan and Li, YK and Wu, Yang and others},
  journal={arXiv preprint arXiv:2402.03300},
  year={2024}
}

@book{sutton1998reinforcement,
  title={Reinforcement learning: An introduction},
  author={Sutton, Richard S and Barto, Andrew G and others},
  year={1998},
  publisher={MIT press Cambridge}
}

@article{kaelbling1996reinforcement,
  title={Reinforcement learning: A survey},
  author={Kaelbling, Leslie Pack and Littman, Michael L and Moore, Andrew W},
  journal={Journal of artificial intelligence research},
  volume={4},
  pages={237--285},
  year={1996}
}

@article{samineni2025rl,
  title={RL in Name Only? Analyzing the Structural Assumptions in RL post-training for LLMs},
  author={Samineni, Soumya Rani and Kalwar, Durgesh and Valmeekam, Karthik and Stechly, Kaya and Kambhampati, Subbarao},
  journal={arXiv preprint arXiv:2505.13697},
  year={2025}
}

@article{teng2025atom,
  title={Atom of thoughts for markov llm test-time scaling},
  author={Teng, Fengwei and Shi, Quan and Yu, Zhaoyang and Zhang, Jiayi and Luo, Yuyu and Wu, Chenglin and Guo, Zhijiang},
  journal={arXiv preprint arXiv:2502.12018},
  year={2025}
}

@article{viteri2024markovian,
  title={Markovian Transformers for Informative Language Modeling},
  author={Viteri, Scott and Lamparth, Max and Chatain, Peter and Barrett, Clark},
  journal={arXiv preprint arXiv:2404.18988},
  year={2024}
}

@article{xin2024atomr,
  title={Atomr: Atomic operator-empowered large language models for heterogeneous knowledge reasoning},
  author={Xin, Amy and Liu, Jinxin and Yao, Zijun and Lee, Zhicheng and Cao, Shulin and Hou, Lei and Li, Juanzi},
  journal={arXiv preprint arXiv:2411.16495},
  year={2024}
}

@article{zhou2024self,
  title={Self-discover: Large language models self-compose reasoning structures},
  author={Zhou, Pei and Pujara, Jay and Ren, Xiang and Chen, Xinyun and Cheng, Heng-Tze and Le, Quoc V and Chi, Ed and Zhou, Denny and Mishra, Swaroop and Zheng, Huaixiu Steven},
  journal={Advances in Neural Information Processing Systems},
  volume={37},
  pages={126032--126058},
  year={2024}
}

@article{aghajohari2025markovian,
  title={The Markovian Thinker: Architecture-Agnostic Linear Scaling of Reasoning},
  author={Aghajohari, Milad and Chitsaz, Kamran and Kazemnejad, Amirhossein and Chandar, Sarath and Sordoni, Alessandro and Courville, Aaron and Reddy, Siva},
  journal={arXiv preprint arXiv:2510.06557},
  year={2025}
}

@inproceedings{li2023compressing,
  title={Compressing context to enhance inference efficiency of large language models},
  author={Li, Yucheng and Dong, Bo and Guerin, Frank and Lin, Chenghua},
  booktitle={Proceedings of the 2023 conference on empirical methods in natural language processing},
  pages={6342--6353},
  year={2023}
}

@inproceedings{li2025prompt,
  title={Prompt compression for large language models: A survey},
  author={Li, Zongqian and Liu, Yinhong and Su, Yixuan and Collier, Nigel},
  booktitle={Proceedings of the 2025 Conference of the Nations of the Americas Chapter of the Association for Computational Linguistics: Human Language Technologies (Volume 1: Long Papers)},
  pages={7182--7195},
  year={2025}
}

@article{packer2023memgpt,
  title={MemGPT: Towards LLMs as Operating Systems},
  author={Packer, Charles and Wooders, Sarah and Lin, Kevin and Fang, Vivian and Patil, Shishir G and Stoica, Ion and Gonzalez, Joseph E},
  journal={arXiv preprint arXiv:2310.08560},
  year={2023}
}

@article{chhikara2025mem0,
  title={Mem0: Building production-ready ai agents with scalable long-term memory},
  author={Chhikara, Prateek and Khant, Dev and Aryan, Saket and Singh, Taranjeet and Yadav, Deshraj},
  journal={arXiv preprint arXiv:2504.19413},
  year={2025}
}

@article{xu2025mem,
  title={A-mem: Agentic memory for llm agents},
  author={Xu, Wujiang and Liang, Zujie and Mei, Kai and Gao, Hang and Tan, Juntao and Zhang, Yongfeng},
  journal={arXiv preprint arXiv:2502.12110},
  year={2025}
}

@inproceedings{zhong2024memorybank,
  title={Memorybank: Enhancing large language models with long-term memory},
  author={Zhong, Wanjun and Guo, Lianghong and Gao, Qiqi and Ye, He and Wang, Yanlin},
  booktitle={Proceedings of the AAAI Conference on Artificial Intelligence},
  volume={38},
  pages={19724--19731},
  year={2024}
}

@article{wu2025resum,
  title={ReSum: Unlocking Long-Horizon Search Intelligence via Context Summarization},
  author={Wu, Xixi and Li, Kuan and Zhao, Yida and Zhang, Liwen and Ou, Litu and Yin, Huifeng and Zhang, Zhongwang and Yu, Xinmiao and Zhang, Dingchu and Jiang, Yong and others},
  journal={arXiv preprint arXiv:2509.13313},
  year={2025}
}

@article{qiao2025webresearcher,
  title={Webresearcher: Unleashing unbounded reasoning capability in long-horizon agents},
  author={Qiao, Zile and Chen, Guoxin and Chen, Xuanzhong and Yu, Donglei and Yin, Wenbiao and Wang, Xinyu and Zhang, Zhen and Li, Baixuan and Yin, Huifeng and Li, Kuan and others},
  journal={arXiv preprint arXiv:2509.13309},
  year={2025}
}

@article{lu2025scaling,
  title={Scaling llm multi-turn rl with end-to-end summarization-based context management},
  author={Lu, Miao and Sun, Weiwei and Du, Weihua and Ling, Zhan and Yao, Xuesong and Liu, Kang and Chen, Jiecao},
  journal={arXiv preprint arXiv:2510.06727},
  year={2025}
}

@inproceedings{
yuan2025reinforce,
title={Reinforce {LLM} Reasoning through Multi-Agent Reflection},
author={Yurun Yuan and Tengyang Xie},
booktitle={Forty-second International Conference on Machine Learning},
year={2025},
url={https://openreview.net/forum?id=6k3oFS3Lbl}
}

@article{chen2025seed,
  title={Seed-prover 1.5: Mastering undergraduate-level theorem proving via learning from experience},
  author={Chen, Jiangjie and Chen, Wenxiang and Du, Jiacheng and Hu, Jinyi and Jiang, Zhicheng and Jie, Allan and Jin, Xiaoran and Jin, Xing and Li, Chenggang and Shi, Wenlei and others},
  journal={arXiv preprint arXiv:2512.17260},
  year={2025}
}

@article{hubert2025olympiad,
  title={Olympiad-level formal mathematical reasoning with reinforcement learning},
  author={Hubert, Thomas and Mehta, Rishi and Sartran, Laurent and Horv{\'a}th, Mikl{\'o}s Z and {\v{Z}}u{\v{z}}i{\'c}, Goran and Wieser, Eric and Huang, Aja and Schrittwieser, Julian and Schroecker, Yannick and Masoom, Hussain and others},
  journal={Nature},
  pages={1--3},
  year={2025},
  publisher={Nature Publishing Group UK London}
}

@article{hui2024qwen2,
  title={Qwen2. 5-Coder Technical Report},
  author={Hui, Binyuan and Yang, Jian and Cui, Zeyu and Yang, Jiaxi and Liu, Dayiheng and Zhang, Lei and Liu, Tianyu and Zhang, Jiajun and Yu, Bowen and Dang, Kai and others},
  journal={arXiv preprint arXiv:2409.12186},
  year={2024}
}

@article{sheng2024hybridflow,
  title   = {HybridFlow: A Flexible and Efficient RLHF Framework},
  author  = {Guangming Sheng and Chi Zhang and Zilingfeng Ye and Xibin Wu and Wang Zhang and Ru Zhang and Yanghua Peng and Haibin Lin and Chuan Wu},
  year    = {2024},
  journal = {arXiv preprint arXiv: 2409.19256}
}

@article{ziebart2008maximum,
  title={Maximum entropy inverse reinforcement learning},
  author={Ziebart, Brian D and Maas, Andrew L and Bagnell, J Andrew and Dey, Anind K},
  journal={AAAI Conference on Artificial Intelligence},
  volume={8},
  pages={1433--1438},
  year={2008}
}

@book{ziebart2010modeling,
  title={Modeling purposeful adaptive behavior with the principle of maximum causal entropy},
  author={Ziebart, Brian D},
  year={2010},
  publisher={Carnegie Mellon University}
}

@article{neu2017unified,
  title={A unified view of entropy-regularized markov decision processes},
  author={Neu, Gergely and Jonsson, Anders and G{\'o}mez, Vicen{\c{c}}},
  journal={arXiv preprint arXiv:1705.07798},
  year={2017}
}

@article{ouyang2022training,
  title={Training language models to follow instructions with human feedback},
  author={Ouyang, Long and Wu, Jeffrey and Jiang, Xu and Almeida, Diogo and Wainwright, Carroll and Mishkin, Pamela and Zhang, Chong and Agarwal, Sandhini and Slama, Katarina and Ray, Alex and others},
  journal={Advances in neural information processing systems},
  volume={35},
  pages={27730--27744},
  year={2022}
}

@article{mnih2015human,
  title={Human-level control through deep reinforcement learning},
  author={Mnih, Volodymyr and Kavukcuoglu, Koray and Silver, David and Rusu, Andrei A and Veness, Joel and Bellemare, Marc G and Graves, Alex and Riedmiller, Martin and Fidjeland, Andreas K and Ostrovski, Georg and others},
  journal={nature},
  volume={518},
  number={7540},
  pages={529--533},
  year={2015},
  publisher={Nature Publishing Group}
}

@article{kakade2001natural,
  title={A natural policy gradient},
  author={Kakade, Sham M},
  journal={Advances in neural information processing systems},
  volume={14},
  year={2001}
}

@inproceedings{cai2020provably,
  title={Provably efficient exploration in policy optimization},
  author={Cai, Qi and Yang, Zhuoran and Jin, Chi and Wang, Zhaoran},
  booktitle={International Conference on Machine Learning},
  pages={1283--1294},
  year={2020},
  organization={PMLR}
}

@inproceedings{schulman2015trust,
  title={Trust region policy optimization},
  author={Schulman, John and Levine, Sergey and Abbeel, Pieter and Jordan, Michael and Moritz, Philipp},
  booktitle={International conference on machine learning},
  pages={1889--1897},
  year={2015},
  organization={PMLR}
}

@inproceedings{geist2019theory,
  title={A theory of regularized markov decision processes},
  author={Geist, Matthieu and Scherrer, Bruno and Pietquin, Olivier},
  booktitle={International conference on machine learning},
  pages={2160--2169},
  year={2019},
  organization={PMLR}
}

@book{cesa2006prediction,
  title={Prediction, learning, and games},
  author={Cesa-Bianchi, Nicolo and Lugosi, G{\'a}bor},
  year={2006},
  publisher={Cambridge university press}
}

@article{hazan2016introduction,
  title={Introduction to online convex optimization},
  author={Hazan, Elad and others},
  journal={Foundations and Trends{\textregistered} in Optimization},
  volume={2},
  number={3-4},
  pages={157--325},
  year={2016},
  publisher={Now Publishers, Inc.}
}

@article{even2009online,
  title={Online Markov decision processes},
  author={Even-Dar, Eyal and Kakade, Sham M and Mansour, Yishay},
  journal={Mathematics of Operations Research},
  volume={34},
  number={3},
  pages={726--736},
  year={2009},
  publisher={INFORMS}
}

@inproceedings{cheng2022adversarially,
  title={Adversarially trained actor critic for offline reinforcement learning},
  author={Cheng, Ching-An and Xie, Tengyang and Jiang, Nan and Agarwal, Alekh},
  booktitle={International Conference on Machine Learning},
  pages={3852--3878},
  year={2022},
  organization={PMLR}
}

@article{xie2021bellman,
  title={Bellman-consistent pessimism for offline reinforcement learning},
  author={Xie, Tengyang and Cheng, Ching-An and Jiang, Nan and Mineiro, Paul and Agarwal, Alekh},
  journal={Advances in neural information processing systems},
  volume={34},
  pages={6683--6694},
  year={2021}
}

@article{ji2024self,
  title={Self-play with adversarial critic: Provable and scalable offline alignment for language models},
  author={Ji, Xiang and Kulkarni, Sanjeev and Wang, Mengdi and Xie, Tengyang},
  journal={arXiv preprint arXiv:2406.04274},
  year={2024}
}

@inproceedings{Kakade2002ApproximatelyOA,
  title={Approximately Optimal Approximate Reinforcement Learning},
  author={Sham M. Kakade and John Langford},
  booktitle={International Conference on Machine Learning},
  year={2002},
  url={https://api.semanticscholar.org/CorpusID:31442909}
}

@article{foster2025good,
  title={Is a Good Foundation Necessary for Efficient Reinforcement Learning? The Computational Role of the Base Model in Exploration},
  author={Foster, Dylan J and Mhammedi, Zakaria and Rohatgi, Dhruv},
  journal={arXiv preprint arXiv:2503.07453},
  year={2025}
}

@article{xie2024exploratory,
  title={Exploratory preference optimization: Harnessing implicit q*-approximation for sample-efficient rlhf},
  author={Xie, Tengyang and Foster, Dylan J and Krishnamurthy, Akshay and Rosset, Corby and Awadallah, Ahmed and Rakhlin, Alexander},
  journal={arXiv preprint arXiv:2405.21046},
  year={2024}
}

@article{yuan2025trajectory,
  title={Trajectory Bellman residual minimization: A simple value-based method for LLM reasoning},
  author={Yuan, Yurun and Chen, Fan and Jia, Zeyu and Rakhlin, Alexander and Xie, Tengyang},
  journal={arXiv preprint arXiv:2505.15311},
  year={2025}
}

\clearpage

\appendix
\crefalias{section}{appendix}
\crefalias{subsection}{appendix}
\onecolumn

\begin{center}
{\LARGE Appendix}
\end{center}

\section{Related Work}

\paragraph{Debate on RL Post-Training}
An active debate has emerged regarding whether RL can endow models with reasoning capabilities that fundamentally exceed those acquired during pre-training. A growing body of work suggests that RL primarily refines, reweights, or selectively amplifies reasoning patterns already latent in the base model, rather than inducing genuinely novel capabilities \citep{shao2024deepseekmath, yue2025does, wu2025invisible, shao2025spurious, yeo2025demystifying}. This view is further reinforced by recent self-improvement methods that eliminate environment interaction or external reward signals altogether, yet achieve comparable gains \citep{prabhudesai2025maximizing, zuo2025ttrl}, indicating that improvement often arises from internal redistribution of probability mass rather than exploratory discovery.

In contrast, works reporting emergent capabilities under RL typically depend on restrictive training designs, such as prerequisite domain knowledge \citep{yuan2025f}, carefully curated task difficulty \citep{zhang2025interplay}, or explicitly designed warm-up phases and partial reward shaping \citep{sun2025rl}. These mechanisms substantially constrain the optimization landscape and guide learning toward known solution manifolds, suggesting that the observed gains reflect controlled extrapolation within a narrow hypothesis space rather than the discovery of fundamentally new reasoning trajectories.

Furthermore, \citet{samineni2025rl} offers a critical reassessment of the prevailing ``history-as-state'' formulation in RL post-training. Combining theoretical arguments with empirical evidence, they show that mainstream RL post-training methods are effectively equivalent to outcome-conditioned supervised learning, implying that---under this formulation---RL has not meaningfully exceeded the representational or optimization capabilities of supervised learning.

\paragraph{Context Management}
A growing body of work addresses context-length explosion by reducing the amount of information carried forward, either through compression \citep{li2023compressing, li2025prompt} or by external memory mechanisms \citep{packer2023memgpt, chhikara2025mem0, xu2025mem, zhong2024memorybank}. These methods typically discard irrelevant details and condense salient information into compact summaries to operate over long contexts. In agentic AI, related context management techniques aim to prevent unbounded growth of interaction histories between an agent and its environment, thereby alleviating context limits and improving the stability of long-horizon training \citep{wu2025resum, lu2025scaling}.

Replacing the full history with a concise summary breaks the strict ``history-as-state'' formulation. However, it only \textit{appears} to yield a Markovian state---compression alone does not ensure the Markov property. A Markov state must be a sufficient statistic for optimal future control: any two histories mapped to the same state should induce identical conditional transition and reward distributions. Existing summarization-based approaches do not enforce this sufficiency, leaving the policy to implicitly learn equivalence across exponentially many trajectories. Moreover, without internalizing the environment’s transition dynamics, a summarization model may not reliably produce valid Markov states from action sequences.

\paragraph{Markov LLM Reasoning}
A few works mitigate reliance on historical information by decomposing problems into atomic reasoning steps and exploring Markovian reasoning processes \citep{xin2024atomr,teng2025atom,zhou2024self}. As an example, Atom of Thoughts (AOT) \citep{teng2025atom} proposes a test-time reasoning framework that iteratively transforms a problem into a sequence of answer-equivalent but progressively simpler subquestions. At each iteration, the current question is decomposed into a temporary dependency structure and then contracted into a new subquestion, which serves as a Markov state for the next step, eliminating reliance on historical reasoning traces. These work focuses on improving test-time abilities through planning and decomposition. 

Additionally, Markovian Thinker \citep{aghajohari2025markovian} structures LLM reasoning into fixed-size chunks, limiting the length of each reasoning step. They find that during RL training the policy learns to write a
textual state near the end of each chunk sufficient for seamless continuation of reasoning after reset. Although their work suggests a specific way to obtain fixed-size textual state, our work systematically showcases and analyzes the benefits of introducing Markov states without additional assumptions on the method of state generation. 

Furthermore, \citet{viteri2024markovian} propose a framework that aims at mitigating unfaithful chain-of-thought reasoning. It consists of a Chain-of-Thought (CoT) generator and a downstream policy that produces the final answer conditioned solely on the generated CoT, thereby treating the CoT as a load-bearing Markov state. 

Finally, WebResearcher \citep{qiao2025webresearcher} can be viewed as applying explicit Markov state estimation to DeepResearch, training a single model to predict both the next state and the next action. However, its primary contribution is a new training algorithm for DeepResearch agents, while our work presents a focused study on the impact of Markov property.

\section{Theoretical Analysis of Markovian Efficiency}

In this section, we provide theoretical analysis regarding the benefit of Markov states additional to \cref{sec:theory}.

\subsection{General Policy Optimization Protocol}

\begin{algorithm}
\caption{General Policy Optimization Protocol}
\label{alg:general_policy_optimization}
\begin{algorithmic}[1]
\STATE \textbf{Initialize:} Initial policy $\pi^{(1)}$ and total iterations $T$.
\FOR{$t = 1$ to $T$}
    \STATE Estimate approximate advantage function $\widehat{A}^{\pi^{(t)}}$ based on current policy $\pi^{(t)}$.
    \STATE Update policy to obtain $\pi^{(t+1)}$ by optimizing objective involving $\widehat{A}^{\pi^{(t)}}$:
    \\[0.5em]
    \quad $\pi^{(t+1)} \leftarrow \mathsf{Optimize}\left( \pi^{(t)}, \widehat{A}^{\pi^{(t)}} \right)$
    \\[0.5em]
\ENDFOR
\STATE \textbf{Return:} Sequence of policies $\pi^{(1)}, \pi^{(2)}, \dots, \pi^{(T+1)}$
\end{algorithmic}
\end{algorithm}

This policy optimization protocol encompasses many popular algorithms. For PPO, the function $\mathsf{Optimize}$ is defined as
\begin{align*}
\mathsf{Optimize}\left( \pi^\iter{t},\widehat A^{\pi^\iter{t}} \right) =\argmax_{\pi} \E_{\pi^\iter{t}} \Big[ \min\Big( \tfrac{\pi(a_h\mid s_h)}{\pi^\iter{t}(a_h\mid s_h)} \widehat A_h^{\pi^\iter{t}}(s_h,a_h),
\text{clip}\Big(\tfrac{\pi(a_h\mid s_h)}{\pi^\iter{t}(a_h\mid s_h)}, 1-\varepsilon, 1+\varepsilon\Big) \widehat A_h^{\pi^\iter{t}}(s_h,a_h) \Big) \Big].
\end{align*}

For GRPO, $\mathsf{Optimize}$ is
\begin{align*}
    \mathsf{Optimize}\left( \pi^\iter{t},\widehat A^{\pi^\iter{t}} \right) &=\argmax_{\pi} \mathbb{E}_{x \sim \rho, \{o^\iter{i}\}_{i=1}^G \sim \pi^\iter{t}(\cdot \mid x)} \Bigg[ \frac{1}{G} \sum_{i=1}^G \frac{1}{|o_i|} \sum_{h=1}^{|o_i|} \Bigg\{ \nonumber \\
    & \quad \min \left[ \frac{\pi(a_h^\iter{i} \mid s_h^\iter{i})}{\pi^\iter{t}(a_h^\iter{i} \mid s_h^\iter{i})} \widehat A^{\pi^\iter{t}}_\iter{i}, \text{clip}\left(\frac{\pi(a_h^\iter{i} \mid s_h^\iter{i})}{\pi^\iter{t}(a_h^\iter{i} \mid s_h^\iter{i})}, 1-\epsilon, 1+\epsilon \right) \widehat A^{\pi^\iter{t}}_\iter{i} \right] - \beta D_{\text{KL}}(\pi \| \piref) \Bigg\}\Bigg],
\end{align*}

\subsection{Discussion of \cref{asm:opt-error}}
\label{sec:opt-error-asm-discussin}
To argue the validity of \cref{asm:opt-error}, we consider the connection between the natural policy gradient \citep{kakade2001natural} and KL-regularized policy optimization in proximal policy optimization \citep{schulman2015trust,schulman2017proximal,cai2020provably}. For ease of presentation, we define $x_h$ to be the essential state at time step $h$ ($x_h = (s_1,a_1,\dotsc,a_{h-1})$ for the action-sequence-based learning and $x_h = \widehat s_h$ for approximate Markovian learning). With this notation, we consider the following RL objectives
\begin{align*}
\pi^\iter{t+1} \leftarrow \argmax_{\pi} &~ \sum_{h = 1}^{H} \E_{\pi^\iter{t}}\left[ \frac{\pi(a_{h} \mid x_{h})}{\pi^\iter{t}(a_{h} \mid x_{h})} \widehat A^{\pi^\iter{t}}(x_{h}, a_{h}) - \beta D_{\mathrm{KL}}(\pi(\cdot \mid x_{h}) \| \pi^\iter{t}(\cdot \mid x_{h})) \right].
\end{align*}
This objective can be viewed as the approximation of popular RL algorithms for LLMs (we omit clipping and multiple rollouts below, since our focus is on convergence behavior). We can rewrite that objective as
\begin{align*}
\pi^\iter{t+1} \leftarrow \argmax_{\pi} &~ \sum_{h = 1}^{H} \E_{x_h \sim \pi^\iter{t}}\left[ \widehat A^{\pi^\iter{t}}(x_{h}, \pi) - \beta D_{\mathrm{KL}}(\pi(\cdot \mid x_{h}) \| \pi^\iter{t}(\cdot \mid x_{h})) \right],
\end{align*}
where $A(x,\pi) \coloneqq \E_{a \sim \pi}[A(x,a)]$.
Then, we can easily verify that one global optimum of this objective is
\begin{align*}
\pi^\iter{t+1}(\cdot \mid x_{h}) \propto \pi^\iter{t}(\cdot \mid x_{h}) \cdot \exp\left(\frac{1}{\beta}\widehat A^{\pi^\iter{t}}(x_{h}, \cdot)\right),
\end{align*}
which implies \cref{asm:opt-error} with $\varepsilon_{\sf opt} = O(\sqrt{{1}/{T}})$. There is a rich literature studied or used this argument from online learning \citep{cesa2006prediction,even2009online,hazan2016introduction,neu2017unified,geist2019theory} to recent RL advances \citep{cai2020provably,xie2021bellman,cheng2022adversarially,ji2024self}.

\subsection{Proof of \cref{prop:non-markov} and \cref{prop:app-markov}}
\label{sec:proof}
We first introduce an episodic version of performance difference lemma \citep{Kakade2002ApproximatelyOA}. 
\begin{lemma}[Performance difference lemma]
    \label[lemma]{lemma:perf-diff}
    For any policy $\pi'$ and $\pi$, we have
    \[J(\pi')-J(\pi)=\sum_{h=1}^{H} \E_{(s_h,a_h)\sim d_h^{\pi'}}\left[A_h^\pi(s_h,a_h)\right] \]
\end{lemma}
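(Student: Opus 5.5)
The plan is the standard telescoping argument for the episodic performance difference lemma. Fix $\pi'$ and $\pi$, and write $J(\pi) = \E_{s_1\sim\rho}[V_1^\pi(s_1)]$. Since $s_1\sim\rho$ under both policies, we have $J(\pi') - J(\pi) = \E_{\tau\sim\pi'}[V_1^\pi(s_1)]$ subtracted from $J(\pi')$. We then expand
\begin{align*}
J(\pi') = \E_{\tau\sim\pi'}\Big[\sum_{h=1}^H r(s_h,a_h)\Big].
\end{align*}
Next we insert the telescoping sum $\sum_{h=1}^H \big(V_{h+1}^\pi(s_{h+1}) - V_h^\pi(s_h)\big) = -V_1^\pi(s_1)$. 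This uses the convention $V_{H+1}^\pi\equiv 0$. The result is
\begin{align*}
J(\pi') - J(\pi) = \sum_{h=1}^H \E_{\tau\sim\pi'}\big[r(s_h,a_h) + V_{h+1}^\pi(s_{h+1}) - V_h^\pi(s_h)\big].
\end{align*}

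The key step is to condition on $(s_h,a_h)$ inside the $h$-th term. Under $\pi'$, the next state satisfies $s_{h+1}\sim\Pcal(s_h,a_h)$. The transition depends only on $(s_h,a_h)$ by the Markov property of the MDP, not on which policy generated the history. Therefore, by the tower rule and the Bellman equation for $\pi$,
\begin{align*}
\E\big[r(s_h,a_h) + V_{h+1}^\pi(s_{h+1}) \mid s_h,a_h\big] = Q_h^\pi(s_h,a_h).
\end{align*}
The $h$-th term thus becomes $\E_{(s_h,a_h)\sim d_h^{\pi'}}[Q_h^\pi(s_h,a_h) - V_h^\pi(s_h)] = \E_{d_h^{\pi'}}[A_h^\pi(s_h,a_h)]$. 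This uses the definition of the occupancy $d_h^{\pi'}$ given before \cref{prop:non-markov}. Summing over $h$ gives the claim.

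The main point needing care is that the lemma is applied later with ``states'' that are action histories $(s_1,a_{1:h-1})$. Those are trivially Markov, since the history determines everything. For approximate Markovian learning, the policy instead acts on $\widehat s_h$. I would state and prove the lemma for a generic Markov state $x_h$ with a Markov policy. For the histories case, I would note that the history process is itself an MDP with deterministic transitions $x_{h+1} = (x_h, a_h)$, so the same telescoping applies verbatim.

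One technical subtlety is worth flagging. The Bellman identity requires $V_{h+1}^\pi$ to be evaluated on the true next state, so the lemma must be applied on the true process. The discrepancy between $\widehat s_h$ and $s_h$ will be handled separately in \cref{prop:app-markov} via \cref{asm:state-transition-accuracy}. That discrepancy is not needed here. Boundedness of rewards in $[0,1]$ ensures that all expectations are finite.
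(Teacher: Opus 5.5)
Your telescoping proof is correct and complete, but it is organized differently from the paper's. The paper proves the lemma by induction on the horizon. It splits $J_{k+1}(\pi')-J_{k+1}(\pi)$ into two pieces:
\begin{itemize}
\item a first-step term $\mathbb{E}_{\pi'}[Q_1^\pi(s_1,a_1)]-\mathbb{E}[V_1^\pi(s_1)]$, which is your conditioning/Bellman step at $h=1$;
\item a remainder $\mathbb{E}_{\pi'}[\sum_{h\ge 2} r(s_h,a_h)]-\mathbb{E}_{\pi'}[V_2^\pi(s_2)]$, to which it applies the inductive hypothesis for the horizon-$k$ problem starting at step $2$.
\end{itemize}
Both proofs rest on the same one-step identity $\mathbb{E}_{\pi'}[r(s_h,a_h)+V_{h+1}^\pi(s_{h+1})\mid s_h,a_h]=Q_h^\pi(s_h,a_h)$. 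You apply it at every $h$ simultaneously via the telescoping sum with $V_{H+1}^\pi\equiv 0$, while the paper applies it once and recurses.

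Your version has two advantages. It makes the single use of the Markov property of the transition explicit. It also avoids two requirements that the induction leaves implicit: the inductive hypothesis must hold for an arbitrary initial distribution, since the sub-problem is rooted at the $\pi'$-induced law of $s_2$ rather than at $\rho$, and it must hold for time-shifted value functions. The induction is the same computation done recursively; neither route is more general.

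Your side remark also deserves one qualification. The action-history process $x_{h+1}=(x_h,a_h)$ is an MDP with reward $r(s_h,a_h)$ depending only on $(x_h,a_h)$ because of the deterministic-dynamics assumption of the theory section. That assumption is what lets $(s_1,a_{1:h-1})$ determine $s_h$, so say so explicitly when you invoke the lemma on histories.
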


We provide the proof for these performance guarantees below.
\begin{proof}[\cpfname{prop:non-markov}]
For each $\pi^\iter{t}$, we have 
\begin{align*}
&~ J(\pi^\star) - \frac{1}{T} \sum_{t=1}^{T} J(\pi^\iter{t})
\\
= &~ \frac{1}{T} \sum_{t=1}^{T} \sum_{h=1}^{H}\E_{\pi^\star}\left[ A^{\pi^\iter{t}}(s_1,a_1,\dotsc,a_{h}) \right]
\\
= &~ \underbrace{\frac{1}{T} \sum_{t=1}^{T} \sum_{h=1}^{H}\E_{\pi^\star}\left[ \widehat A^{\pi^\iter{t}}(s_1,a_1,\dotsc,a_{h}) \right]}_{\text{(I)}} + \underbrace{\frac{1}{T} \sum_{t=1}^{T} \sum_{h=1}^{H}\E_{\pi^\star}\left[ A^{\pi^\iter{t}}(s_1,a_1,\dotsc,a_{h}) - \widehat A^{\pi^\iter{t}}(s_1,a_1,\dotsc,a_{h}) \right]}_{\text{(II)}}.
\end{align*}
For a fixed $h$, let $x_h = (s_1, a_1, \dots, a_{h-1})$ and $\tau_h = (s_1,a_{1:h}) = (x_h, a_h)$. 
The first equation is a direct application of \cref{lemma:perf-diff}, treating $(s_1,a_1,\dotsc,a_{h-1})$ as the essential state $x_h$. 

Term (I) can be bounded as $\leq H \varepsilon_{\sf opt}$ by \cref{asm:opt-error}. 
We now bound the term (II). We have:
\begin{align*}
\E_{\pi^\star}\left[ A^{\pi^\iter{t}}(\tau_h) - \widehat A^{\pi^\iter{t}}(\tau_h) \right]
&= \sum_{\tau_h \in \Scal_1\times \Acal^h} d_h^{\pi^\star}(\tau_h) (A^{\pi^\iter{t}}(\tau_h) - \widehat A^{\pi^\iter{t}}(\tau_h))
\\
&= \sum_{\tau_h \in \Scal_1\times \Acal^h} d_h^{\pi^\iter{t}}(\tau_h) \frac{d_h^{\pi^\star}(\tau_h)}{d_h^{\pi^\iter{t}}(\tau_h)} (A^{\pi^\iter{t}}(\tau_h) - \widehat A^{\pi^\iter{t}}(\tau_h))
\\
&\le \sqrt{\E_{\pi^\iter{t}}\left[ \left(\frac{d_h^{\pi^\star}(\tau_h)}{d_h^{\pi^\iter{t}}(\tau_h)}\right)^2 \right]} \sqrt{\E_{\pi^\iter{t}}[(A^{\pi^\iter{t}}(\tau_h) - \widehat A^{\pi^\iter{t}}(\tau_h))^2]}
\\
&\le \sqrt{\E_{\pi^\iter{t}}\left[ \left(\frac{d_h^{\pi^\star}(\tau_h)}{d_h^{\pi^\iter{t}}(\tau_h)}\right)^2 \right]} \varepsilon_{\sf stat}.
\end{align*}
Therefore, 
\begin{align*}
    \text{Term (II)} \leq H \sqrt{\max_{t,h} \E_{\pi^\iter{t}}\left[ \left( \frac{d_h^{\pi^\star}(s_1,a_{1:h})}{d_h^{\pi^\iter{t}}(s_1,a_{1:h})} \right)^2 \right]}\varepsilon_{\sf stat}
\end{align*}

Summing up the bounds of term (I) and (II) concludes our proof. 
\end{proof}

\begin{proof}[\cpfname{prop:app-markov}]
For each $\pi^\iter{t}$, we have 
\begin{align*}
&~ J(\pi^\star) - \frac{1}{T} \sum_{t=1}^{T} J(\pi^\iter{t})
\\
= &~ \frac{1}{T} \sum_{t=1}^{T} \sum_{h=1}^{H}\E_{\pi^\star}\left[ A^{\pi^\iter{t}}(s_h,a_h) \right]
\\
= &~ \frac{1}{T} \sum_{t=1}^{T} \sum_{h=1}^{H}\E_{\pi^\star}\left[ A^{\pi^\iter{t}}(s_h,a_h) - A^{\pi^\iter{t}}(\widehat s_h,a_h) \right] + \frac{1}{T} \sum_{t=1}^{T} \sum_{h=1}^{H}\E_{\pi^\star}\left[ A^{\pi^\iter{t}}(\widehat s_h,a_h) \right]
\\
= &~ \underbrace{\frac{1}{T} \sum_{t=1}^{T} \sum_{h=1}^{H}\E_{\pi^\star}\left[ \widehat A^{\pi^\iter{t}}(\widehat s_h,a_h) \right]}_{\text{(I)}} + \underbrace{\frac{1}{T} \sum_{t=1}^{T} \sum_{h=1}^{H}\E_{\pi^\star}\left[ A^{\pi^\iter{t}}(\widehat s_h,a_h) - \widehat A^{\pi^\iter{t}}(\widehat s_h,a_h) \right]}_{\text{(II)}}
\\
&~ + \underbrace{\frac{1}{T} \sum_{t=1}^{T} \sum_{h=1}^{H}\E_{\pi^\star}\left[ A^{\pi^\iter{t}}(s_h,a_h) - A^{\pi^\iter{t}}(\widehat s_h,a_h) \right]}_{\text{(III)}},
\end{align*}
where the first equation is a direct application of \cref{lemma:perf-diff}. 

Term (I) can be bounded as $\leq H \varepsilon_{\sf opt}$ by \cref{asm:opt-error}. 

For term (III), recall that $\Pr(s_h \neq \widehat s_h) \le H \varepsilon_{\Pcal}$, and therefore term (III) is bounded as:
\begin{align*}
    \E_{\pi^\star}\left[ A^{\pi^\iter{t}}(s_h,a_h) - A^{\pi^\iter{t}}(\widehat s_h,a_h) \right] 
    &\le \Pr(s_h \neq \widehat s_h) \cdot \max_{s, \widehat s, a} |A^{\pi^\iter{t}}(s,a) - A^{\pi^\iter{t}}(\widehat s,a)|
    \\
    &\le (H \varepsilon_{\Pcal}) \cdot 2H = 2 H^2 \varepsilon_{\Pcal}.
\end{align*}
Next, for term (II), let $\Delta(s,a) = A^{\pi^\iter{t}}(s,a) - \widehat A^{\pi^\iter{t}}(s,a)$. This term can be bounded as:
\begin{align*}
& \E_{\pi^\star}\left[ \Delta(\widehat s_h,a_h) \right]
\\
= & \E_{\pi^\star}\left[ \E[\Delta(\widehat s_h,a_h) \mid s_h, a_h] \right] 
\\
= & \sum_{s,a} d^{\pi^\iter{t}}(s,a) \frac{d^{\pi^\star}(s,a)}{d^{\pi^\iter{t}}(s,a)} \E[\Delta(\widehat s_h,a_h) \mid s, a]
\\
\le & \sqrt{\E_{\pi^\iter{t}}\left[ \left(\frac{d^{\pi^\star}(s_h,a_h)}{d^{\pi^\iter{t}}(s_h,a_h)}\right)^2 \right]} \sqrt{\E_{\pi^\iter{t}}[(\E[\Delta(\widehat s_h,a_h) \mid s_h, a_h])^2]} \tag{Cauchy-Schwarz}
\\
\le & \sqrt{\E_{\pi^\iter{t}}\left[ \left(\frac{d^{\pi^\star}(s_h,a_h)}{d^{\pi^\iter{t}}(s_h,a_h)}\right)^2 \right]} \sqrt{\E_{\pi^\iter{t}}[\Delta(\widehat s_h,a_h)^2]} \tag{Jensen's inequality}
\\
= & \sqrt{\E_{\pi^\iter{t}}\left[ \left(\frac{d^{\pi^\star}(s_h,a_h)}{d^{\pi^\iter{t}}(s_h,a_h)}\right)^2 \right]} \varepsilon_{\sf stat}.
\end{align*}

Summing up the bounds of term (I), (II), and (III) concludes our proof. 

\end{proof}

\subsection{Illustration of the Theoretical Analysis with Combination Lock}
\label{sec:theory-comblock-comparison}
\begin{figure}[ht]
    \centering
    \begin{tikzpicture}[
        node distance=1.5cm and 1.5cm,
        state/.style={circle, draw, minimum size=0.6cm, inner sep=0pt, font=\small},
        fail/.style={circle, draw, fill=red!10, minimum size=0.6cm, inner sep=0pt, font=\small},
        goal/.style={circle, draw, double, fill=green!10, minimum size=0.6cm, inner sep=0pt, font=\small},
        every edge/.style={draw, ->, >=stealth}
    ]
    
    \node (root) at (0,0) [state] {$s_1$};
    
    \node (n1_correct) at (1.5, 0.8) [state] {$h_1^+$};
    \node (n1_wrong)   at (1.5, -0.8) [fail] {$h_1^-$};
    
    \node (n2_cc) at (3, 1.6) [state] {$h_2^{++}$};
    \node (n2_cw) at (3, 0.0) [fail] {$h_2^{+-}$};
    \node (n2_wc) at (3, -1.6) [fail] {$h_2^{-+}$};
    
    \draw (root) edge node[above, sloped, font=\scriptsize] {Correct} (n1_correct);
    \draw (root) edge node[below, sloped, font=\scriptsize] {Wrong} (n1_wrong);
    
    \draw (n1_correct) edge node[above, sloped, font=\scriptsize] {Correct} (n2_cc);
    \draw (n1_correct) edge node[below, sloped, font=\scriptsize] {Wrong} (n2_cw);
    \draw (n1_wrong)   edge[dashed] (n2_wc);

    \node at (4, 1.6) {$\dots$};
    \node at (4, -1.6) {$\dots$};

    \node at (2, -2.5) {\textbf{(a) Non-Markovian (History Tree)}};
    \node at (2, -3.0) {\small State space grows as $\Ocal(|\Acal|^H)$.};
    \node at (2, -3.4) {\small Exponential Coverage Needed.};

    \begin{scope}[xshift=7cm]
        \node (m_root) at (0,0) [state] {$s_1$};
        \node (m_1) at (1.5, 0) [state] {$s_2$};
        \node (m_2) at (3.0, 0) [state] {$s_3$};
        \node (m_dots) at (4.2, 0) {$\dots$};
        \node (m_goal) at (5.2, 0) [goal] {$s_H$};

        \draw (m_root) edge node[above, font=\scriptsize] {Correct} (m_1);
        \draw (m_1) edge node[above, font=\scriptsize] {Correct} (m_2);
        \draw (m_2) edge (m_dots);
        \draw (m_dots) edge (m_goal);

        \draw (m_root) edge[loop above] node[font=\tiny] {Wrong} (m_root);
        \draw (m_1) edge[bend left=45] node[below, font=\scriptsize] {Wrong} (m_root);
        \draw (m_2) edge[bend left=60] node[below, font=\scriptsize] {Wrong} (m_root);

        \node at (2.5, -2.5) {\textbf{(b) Markovian (State Chain)}};
        \node at (2.5, -3.0) {\small State space grows as $\Ocal(H)$.};
        \node at (2.5, -3.4) {\small Linear Coverage Sufficient.};
    \end{scope}

    \end{tikzpicture}
    \caption{Visual comparison of the state space complexity for the Combination Lock problem.}
    \label{fig:theory-comblock-comparison}
\end{figure}
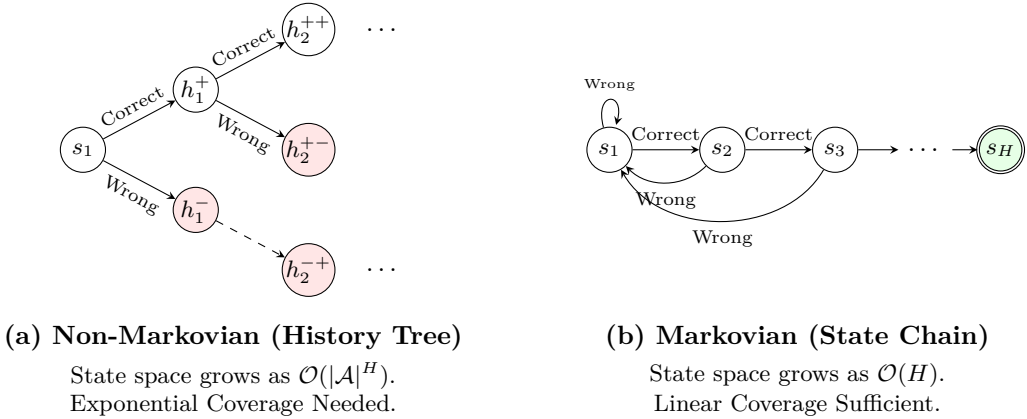

As shown in \cref{fig:theory-comblock-comparison}, the state space complexity diverges significantly depending on how the agent represents its environment. In the action-sequence-based learning, the agent treats the entire history of actions as its state, which can be visualized as a \emph{History Tree}. Each unique sequence of moves is treated as a distinct node in this tree. Consequently, the state space grows exponentially as $\Ocal(|\Acal|^H)$. To learn an optimal policy in this regime, the agent requires exponential coverage of the history space, leading to the prohibitively high sample complexity. 

In contrast, the Markovian formulation represents the environment as a \emph{State Chain}. By mapping all action histories to a Markov state $s_h$, the agent collapses the complex tree structure into a linear progression of length $H$. This representation ensures that the state space grows only linearly, $\Ocal(H)$, making linear coverage sufficient for convergence.

\subsection{Proofs for Supporting Lemmas}

\begin{proof}[\cpfname{lemma:perf-diff}]
Let us prove by induction. When $H = 1$, the trajectory consists of a single step, and the statement is trivial. 

Now assume that for horizon $k$ the statement holds for any policy $\pi'$ and $\pi$:
\[
J_k(\pi') - J_k(\pi) = \sum_{h=1}^k \mathbb{E}_{(s_h,a_h) \sim d_h^{\pi'}} \left[ A_h^\pi(s_h,a_h) \right].
\]
For $H = k + 1$, we expand $J_{k+1}(\pi') - J_{k+1}(\pi)$:
\begin{align*}
J_{k+1}(\pi') - J_{k+1}(\pi) =&~ \mathbb{E}_{\pi'} \left[ \sum_{h=1}^{k+1} r(s_h, a_h) \right] - \mathbb{E}_{\pi} \left[ \sum_{h=1}^{k+1} r(s_h, a_h) \right] \\
=&~ \mathbb{E}_{\pi'} \left[ \sum_{h=2}^{k+1} r(s_h, a_h) \right] - \E_{\pi'} [V_2^\pi (s_2)] \tag{I} \\
&~ + 
\E_{\pi'}[r(s_1,a_1)] + \E_{\pi'} [V_2^\pi (s_2)]  - \mathbb{E}_{\pi} \left[ \sum_{h=1}^{k+1} r(s_h, a_h) \right] \tag{II}
\end{align*}
Term I corresponds to the difference between expected returns of $\pi'$ and $\pi$ on an MDP with horizon $k$. Apply the inductive hypothesis to term I, we have
\begin{align*}
    \mathbb{E}_{\pi'} \left[ \sum_{h=2}^{k+1} r(s_h, a_h) \right] - \E_\pi' [V_2^\pi (s_2)]=\sum_{h=2}^H \mathbb{E}_{(s_h, a_h) \sim d_h^{\pi'}} \left[ Q_h^\pi(s_h, a_h) - V_h^\pi(s_h) \right]
\end{align*}
For term II, we have
\begin{align*}
&~ \E_{\pi'}[r(s_1,a_1)] + \E_{\pi'} [V_2^\pi (s_2)]  - \mathbb{E}_{\pi} \left[ \sum_{h=1}^{k+1} r(s_h, a_h) \right] \\
=&~ \E_{\pi'}[Q_1^\pi(s_1,a_1)]-\E_\pi[V_1^\pi(s_1)] \\
=&~ \E_{s_1\sim d_1^{\pi'}}\left[\E_{a_1\sim \pi'(\cdot\mid s_1)}\left[Q_1^\pi(s_1,a_1)\right]-V_1^\pi(s_1)\right]
\end{align*}

Summing up term I and II concludes our proof. 
\end{proof}

\section{Additional Results}
\label{sec:addition-results}
\subsection{Pass@\texorpdfstring{$k$}{k} Performance}
\label{sec:pass-k-full}
\begin{figure}
    \centering
    \includegraphics[width=1\linewidth]{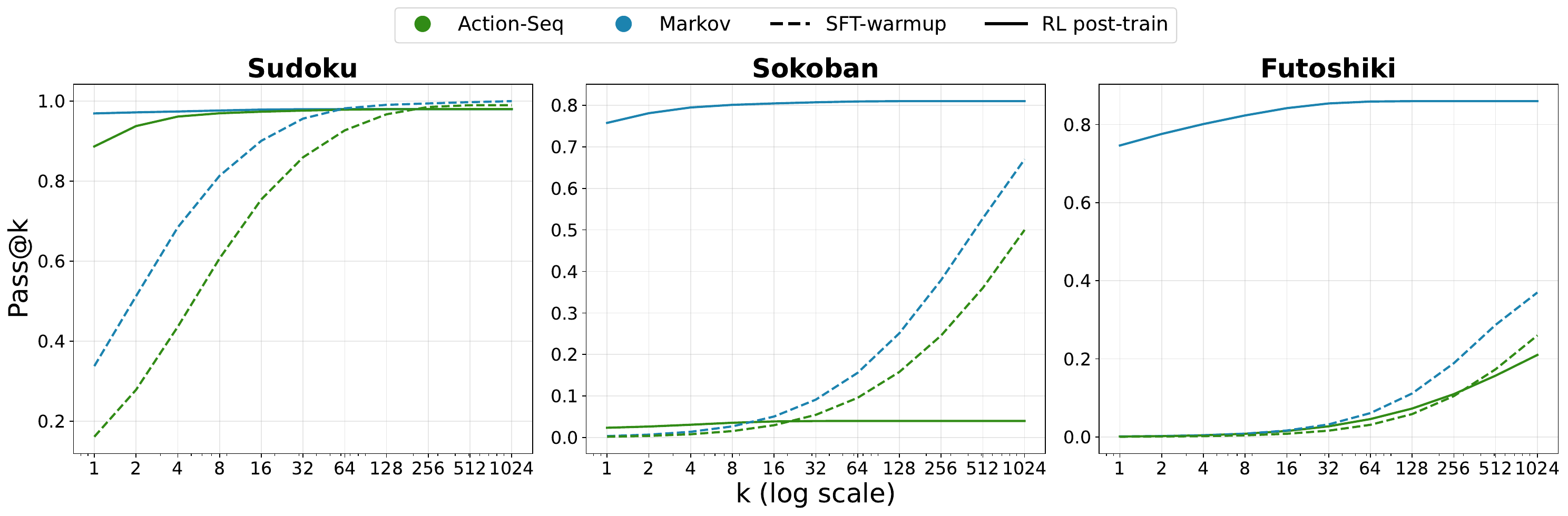}
    \caption{Pass@$k$ accuracy for Qwen3-4B-based models. While action-sequence models rarely improve SFT Pass@$k$, Markov models consistently surpass their base models' limits.}
    \label{fig:pass-at-k-all}
\end{figure}

We presents Qwen3-4B's Pass@$k$ accuracy as $k$ scales (\cref{fig:pass-at-k-all}). Several distinct patterns emerge. For less challenging tasks like Sudoku, where SFT-warmup models already achieve high Pass@$1024$, both Markov and action-sequence models sharpen Pass@$1$ performance, with Markov models holding a slight advantage. However, on more difficult tasks like Sokoban and Futoshiki, action-sequence models fail to extend or even maintain the Pass@$k$ of SFT models. In contrast, Markov models break through the capability boundaries of the base models, remarkably extending Pass@$k$.

\subsection{Training Success Rate}
\label{sec:train-success-rate}
We report the training-time success rate during RL post-training in \cref{fig:train-time-reward-complete}. For the experiments in \cref{sec:exp-markov-generalization}, the corresponding training dynamics are shown in \cref{fig:prm-train-time-reward-complete}. Across nearly all tasks and model variants, Markov models converge faster and achieve higher final success rates than action-sequence models. Introducing Markov states partially alleviates the slow-growth issue of action-sequence models in state–action-sequence variants, but a noticeable performance gap relative to Markov models remains.

\begin{figure}
    \centering
    \includegraphics[width=1\linewidth]{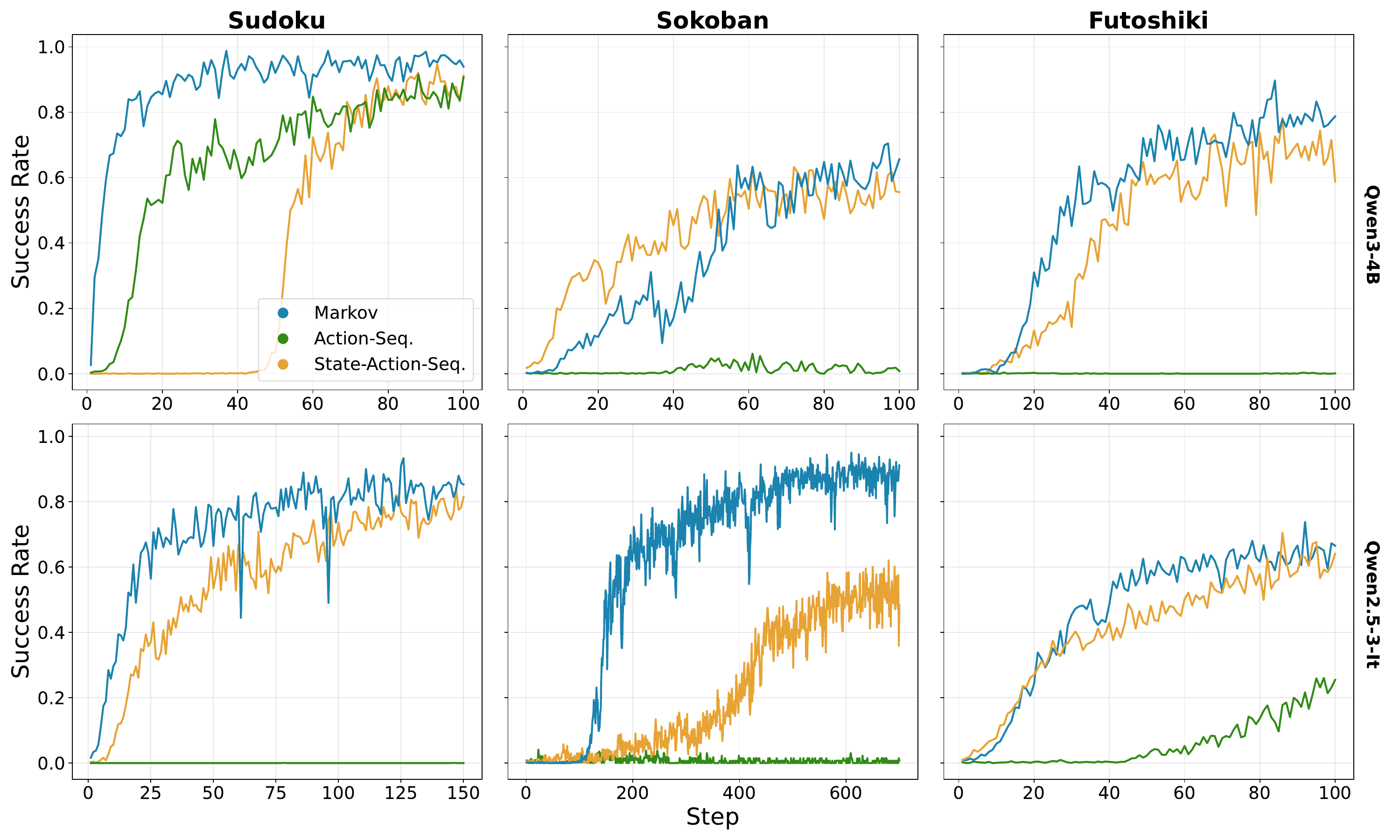}
    \caption{Training-time success rate during RL post-training. }
    \label{fig:train-time-reward-complete}
\end{figure}

\begin{figure}
    \centering
    \includegraphics[width=1\linewidth]{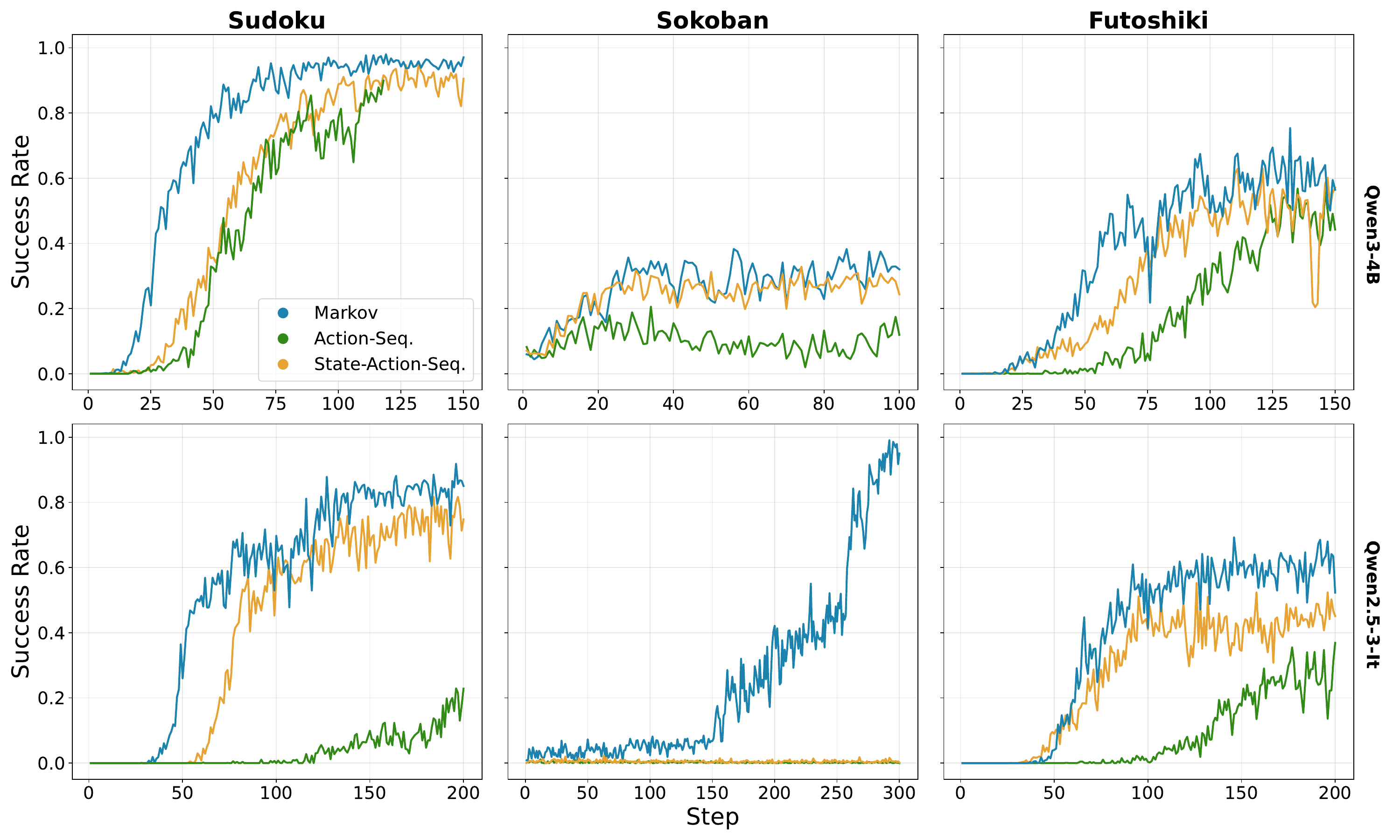}
    \caption{Training-time success rate during RL post-training for models with access to $\Astar$ in \cref{sec:exp-markov-generalization}. }
    \label{fig:prm-train-time-reward-complete}
\end{figure}

\subsection{Dependence on the Current State in State-Action-Sequence Models}
\label{sec:sas-model-depend-cur-state}

\begin{table}[ht]
    \centering
    \small
    \setlength{\tabcolsep}{8pt} %
    \begin{tabular}{lccc}
        \toprule
        \textbf{Approach} & \textbf{Sudoku} & \textbf{Sokoban} & \textbf{Futoshiki} \\
        \midrule
        \textbf{Qwen3-4B} & & & \\
        \textit{State-action-seq.} & 89.1 & 58.2 & 44.2 \\
        \hspace{0.5em} \textit{History-only} & 0.0 & 1.7 & 0.0 \\
        \hspace{0.5em} \textit{Last-state-only} & 39.2 & 3.1 & 26.6 \\
        \bottomrule
    \end{tabular}
    \caption{
        Analysis of the focus of the state-action-sequence models. \textbf{History-only}: the model is restricted to previous states and actions without access to the current state. \textbf{Last-state-only}: the model observes only the most recent state.         
        All evaluations are conducted with the ground-truth state transition $\mathcal{P}^\star$.
    }
    \label{tab:spurious-state}
\end{table}

To further probe what the state-action-sequence model actually uses after training, We test whether the model treats the current state as a sufficient statistic of the history. We evaluate two controlled variants: 
\begin{inlineenum}
    \item History-only, which removes the current state $s_h$ and provides only previous states and actions $\{(s_i,a_i)\}_{i=0}^{h-1}$, and
    \item Last-state-only, which keeps only the most recent state $s_h$ and discards the full history. 
\end{inlineenum}
\cref{tab:spurious-state} shows a stark pattern: History-only performance collapses to near zero across tasks, while Last-state-only retains a non-trivial fraction of the full state-action-sequence accuracy. Overall, these results indicate that well-trained state-action-sequence models rely primarily on the current state, with the action/state history contributing comparatively little, consistent with the view that they tend to internalize the Markov property of the state representation.

\subsection{Ablation on the Degree of SFT Warm-up}
\label{sec:ablation-sft-warmup-degree}

\begin{figure*}
    \centering
    \includegraphics[width=1\linewidth]{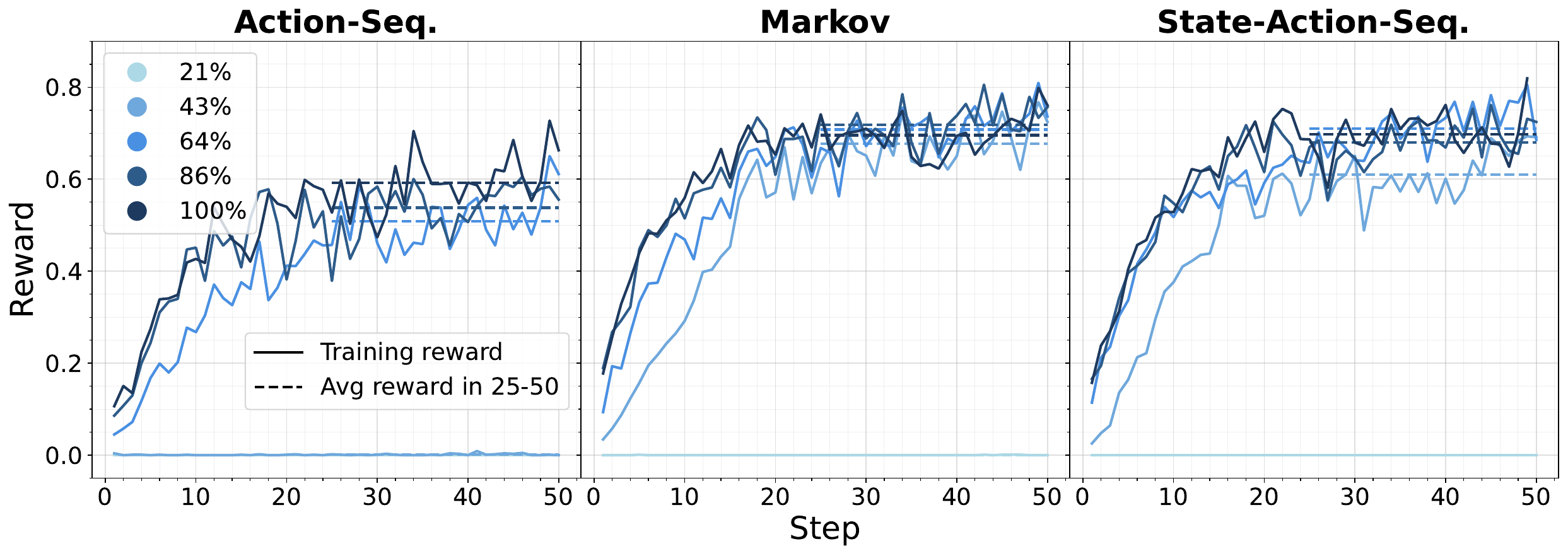}
    \caption{Ablation on the degree of SFT warm-up dataset. Experiments are conducted with Qwen2.5-3B-It on Sudoku. }
    \label{fig:sft-size-ablation}
\end{figure*}

To assess how the degree of SFT warm‑up affects subsequent RL, we run an ablation over the fraction of SFT steps used to initialize the policy. We first train a base model with SFT to completion, saving intermediate checkpoints at 21\%, 43\%, 64\%, 86\%, and 100\% of the total SFT steps. For each checkpoint, we then launch a fresh RL run, initializing the policy from that checkpoint while keeping all other RL settings fixed. As shown in \cref{fig:sft-size-ablation}, 21\% SFT is insufficient for all three approaches, with reward remaining at zero throughout RL. Continuing the warmup to 43\%, the reward of Markov and State-Action-Seq starts ascending, whereas the Action-Seq. model still fails to escape the zero‑reward plateau. Remarkably, even when fully SFTed (100\%), the Action-Seq. model never reaches the final reward achieved by the 43\%-SFT Markov model, highlighting that the Markov formulation is inherently easier for RL, more sample‑efficient, and less reliant on heavy SFT. Comparing Markov and State-Action-Seq., we observe that with 43\% SFT the latter learns more slowly and converges to slightly lower reward, but as the SFT fraction increases the State-Action-Seq. model gradually internalizes the Markov structure of the environment and matches the performance of the Markov baseline.

\subsection{Markov States in SFT}
\label{sec:markov-pure-sft}

\begin{table*}[th!]
    \centering
    \small
    \setlength{\tabcolsep}{0pt} 
    \begin{tabularx}{\textwidth}{l*{12}{>{\centering\arraybackslash}X}}
        \toprule
        \textbf{Approach} & \multicolumn{6}{c}{\textbf{In Distribution}} & \multicolumn{6}{c}{\textbf{Out Of Distribution}} \\
        \cmidrule(lr){2-7} \cmidrule(lr){8-13}
        & \multicolumn{2}{c}{\textbf{Sudoku}} & \multicolumn{2}{c}{\textbf{Sokoban}} & \multicolumn{2}{c}{\textbf{Futoshiki}} 
        & \multicolumn{2}{c}{\textbf{Sudoku}} & \multicolumn{2}{c}{\textbf{Sokoban}} & \multicolumn{2}{c}{\textbf{Futoshiki}} \\
        \cmidrule(lr){2-3} \cmidrule(lr){4-5} \cmidrule(lr){6-7} \cmidrule(lr){8-9} \cmidrule(lr){10-11} \cmidrule(lr){12-13}
        & \metricname{Avg} & \metricname{Pass} 
        & \metricname{Avg} & \metricname{Pass} 
        & \metricname{Avg} & \metricname{Pass} 
        & \metricname{Avg} & \metricname{Pass} 
        & \metricname{Avg} & \metricname{Pass} 
        & \metricname{Avg} & \metricname{Pass} \\
        \midrule
        \textbf{Qwen3-4B} & & & & & & & & & & & & \\
        \hspace{0.5em} \textit{Action-seq.} & 76.0 & 100.0 & 2.2 & 79.0 & 57.9 & 100.0 & 35.8 & 98.0 & 0.1 & 10.0 & 13.0 & 100.0 \\
        \rowcolor{highlightbgcolor}\hspace{0.5em} \textit{Markov} & 82.4 & 100.0 & 56.2 & 94.0 & 62.5 & 100.0 & 45.1 & 99.0 & 13.0 & 66.0 & 15.5 & 99.0 \\
        \hspace{0.5em} \textit{State-Action-seq.} & 70.4 & 100.0 & 53.6 & 98.0 & 61.0 & 100.0 & 33.1 & 99.0 & 14.9 & 58.0 & 14.8 & 100.0 \\
        \midrule
        \textbf{Qwen2.5-3B-It} & & & & & & & & & & & & \\
        \hspace{0.5em} \textit{Action-seq.} & 48.0 & 100.0 & 5.3 & 87.0 & 12.4 & 99.0 & 11.4 & 94.0 & 1.0 & 27.0 & 0.8 & 49.0 \\
        \rowcolor{highlightbgcolor}\hspace{0.5em} \textit{Markov} & 51.3 & 100.0 & 52.5 & 93.0 & 36.6 & 100.0 & 14.0 & 98.0 & 15.8 & 65.0 & 5.0 & 94.0 \\
        \hspace{0.5em} \textit{State-Action-seq.} & 62.0 & 100.0 & 35.7 & 95.0 & 50.3 & 100.0 & 19.9 & 97.0 & 5.5 & 58.0 & 9.4 & 98.0 \\
        \bottomrule
    \end{tabularx}
    \caption{Performance comparison of different approaches for SFT.}
    \label{tab:pure-sft}
\end{table*}

We have demonstrated the importance of Markov states in RL post-training due to its lower sample complexity, it is interesting to investigate the case in supervised learning. Therefore, we SFT the models and show the evaluation results in \cref{tab:pure-sft}. By comparing the approaches across models and tasks, we conclude two findings: 
\begin{inlineenum}
    \item Action-sequence models consistently performs worse than Markov models and state-action-sequence models, most significantly on Sokoban, indicating the benefits of Markov states even in the supervised learning paradigm. By conditioning on an explicit (predicted) state provided by an external transition model, models with Markov states no longer need to reconstruct the current board configuration implicitly in its latent space, thereby offloading the burden of state tracking and prediction, therefore lower the sample complexity.  
    \item The gap between Markov models and state-action-sequence models shrinks or becomes unclear. This is because the SFT objective is to maximize likelihood from fixed offline trajectories rather than exploratory discovery. Unlike RL, which requires expansive coverage to reliably explore, SFT bypasses the need for the significant sample complexity reduction. Consequently, the Markov property is less critical in supervised settings.
\end{inlineenum}

\subsection{Evaluation Results of Models with Access to \texorpdfstring{$\Astar$}{A*}}
\label{sec:eval-generalization-full}
\begin{table*}[ht!]
    \centering
    \small
    \begin{threeparttable}
    \setlength{\tabcolsep}{0pt} 
    \begin{tabularx}{\textwidth}{l *{12}{>{\centering\arraybackslash}X}} 
        \toprule
        \textbf{Approach} & \multicolumn{6}{c}{\textbf{In Distribution}} & \multicolumn{6}{c}{\textbf{Out Of Distribution}} \\
        \cmidrule(lr){2-7} \cmidrule(lr){8-13}
        & \multicolumn{2}{c}{\textbf{Sudoku}} & \multicolumn{2}{c}{\textbf{Sokoban}\tnote{1}} & \multicolumn{2}{c}{\textbf{Futoshiki}} 
        & \multicolumn{2}{c}{\textbf{Sudoku}} & \multicolumn{2}{c}{\textbf{Sokoban}\tnote{1}} & \multicolumn{2}{c}{\textbf{Futoshiki}} \\
        \cmidrule(lr){2-3} \cmidrule(lr){4-5} \cmidrule(lr){6-7} \cmidrule(lr){8-9} \cmidrule(lr){10-11} \cmidrule(lr){12-13}
        & \metricname{Avg} & \metricname{Pass} 
        & \metricname{Avg} & \metricname{Pass} 
        & \metricname{Avg} & \metricname{Pass} 
        & \metricname{Avg} & \metricname{Pass} 
        & \metricname{Avg} & \metricname{Pass} 
        & \metricname{Avg} & \metricname{Pass} \\
        \midrule
        \textbf{Qwen3-4B} & & & & & & & & & & & & \\
        \hspace{0.5em} \textit{Action-seq. w/ $A^*$} & 90.8 & 93.0 & 18.2 & 23.0 & 54.8 & 71.0 & 61.0 & 78.0 & 2.7 & 8.0 & 11.7 & 21.0 \\
        \rowcolor{highlightbgcolor}\hspace{0.5em} \textit{Markov w/ $A^*$} & 97.8 & 100.0 & 33.0 & 33.0 & 64.8 & 73.0 & 80.9 & 85.0 & 4.0 & 4.0 & 25.2 & 35.0 \\
        \hspace{0.5em} \textit{State-Action-seq. w/ $A^*$} & 94.0 & 98.0 & 19.8 & 25.0 & 51.6 & 69.0 & 73.7 & 83.0 & 3.9 & 8.0 & 14.4 & 32.0 \\
        \midrule
        \textbf{Qwen2.5-3B-It} & & & & & & & & & & & & \\
        \hspace{0.5em} \textit{Action-seq. w/ $A^*$} & 28.9 & 44.0 & 0.2 & 5.0 & 44.1 & 63.0 & 2.9 & 9.0 & 0.0 & 0.0 & 3.8 & 16.0 \\
        \rowcolor{highlightbgcolor}\hspace{0.5em} \textit{Markov w/ $A^*$} & 83.4 & 86.0 & 94.2 & 97.0 & 54.2 & 59.0 & 48.8 & 57.0 & 82.5 & 87.0 & 26.3 & 36.0 \\
        \hspace{0.5em} \textit{State-Action-seq. w/ $A^*$} & 84.8 & 92.0 & 0.2 & 1.0 & 51.9 & 71.0 & 45.8 & 59.0 & 0.0 & 0.0 & 11.2 & 26.0 \\
        \bottomrule
    \end{tabularx}
    \begin{tablenotes}
        \footnotesize
        \item[1] Given the capability constraints of the base models, we use lower-complexity Sokoban tasks in this section. 
    \end{tablenotes}
    \caption{Performance comparison of $\pimkv^{A^*}$, $\piactseq^{A^*}$, and $\pistactseq^{A^*}$ to show the difference of their generalization.}
    \label{tab:markov-astar-full}
    \end{threeparttable}
\end{table*}

We present the full evaluation results on $\pimkv^{\Astar}$, $\piactseq^{\Astar}$, and $\pistactseq^{\Astar}$ in \cref{tab:markov-astar-full} as a supplement to \cref{sec:exp-markov-generalization}.

\subsection{Examples of Solving Problems in Multiple Steps}
\definecolor{mydarkblue}{HTML}{0066CC}
\definecolor{mydarkgreen}{RGB}{0, 100, 0}

\newcommand{\bluebold}[1]{\textcolor{mydarkblue}{\textbf{#1}}}
\newcommand{\greenbold}[1]{\textcolor{mydarkgreen}{\textbf{#1}}}
\label{sec:examples}
\begin{tcolorbox}[breakable, colframe=gray, coltext=black, title=Action-Seq. Model $\piactseq$, fonttitle=\bfseries, boxrule=0.5mm, width=\textwidth]
\bluebold{State $s_0$}
\begin{adjustwidth}{2em}{0pt}
Here is the initial board state:\\
\texttt{3 6 4 8 1 9 2 \_ 7\\
5 9 8 3 7 2 6 1 4\\
\_ 2 \_ 5 6 4 9 3 8\\
8 1 5 4 3 6 7 9 2\\
9 \_ 6 2 8 7 5 4 1\\
2 4 \_ \_ 5 1 3 8 6\\
4 8 2 6 9 3 1 7 5\\
1 5 3 7 2 8 4 6 9\\
\_ 7 9 1 4 5 \_ 2 3\\}
\\
What is your immediate next move?
\end{adjustwidth}

\greenbold{Action $a_0$} 
\begin{adjustwidth}{2em}{0pt}
\textbackslash boxed\{0,7,5\}
\end{adjustwidth}

\bluebold{State $s_1$}
\begin{adjustwidth}{2em}{0pt}
Here is the initial board state:\\
\texttt{3 6 4 8 1 9 2 \_ 7\\
5 9 8 3 7 2 6 1 4\\
\_ 2 \_ 5 6 4 9 3 8\\
8 1 5 4 3 6 7 9 2\\
9 \_ 6 2 8 7 5 4 1\\
2 4 \_ \_ 5 1 3 8 6\\
4 8 2 6 9 3 1 7 5\\
1 5 3 7 2 8 4 6 9\\
\_ 7 9 1 4 5 \_ 2 3\\}
\\
Here is the history of moves made so far:\\
\textbf{Action taken: Fill cell at (0, 7) with 5}\\
What is your immediate next move?
\end{adjustwidth}
\greenbold{Action $a_1$}
\begin{adjustwidth}{2em}{0pt}
\textbackslash boxed\{8,0,6\}
\end{adjustwidth}

\bluebold{State $s_2$}
\begin{adjustwidth}{2em}{0pt}
Here is the initial board state:\\
\texttt{3 6 4 8 1 9 2 \_ 7\\
5 9 8 3 7 2 6 1 4\\
\_ 2 \_ 5 6 4 9 3 8\\
8 1 5 4 3 6 7 9 2\\
9 \_ 6 2 8 7 5 4 1\\
2 4 \_ \_ 5 1 3 8 6\\
4 8 2 6 9 3 1 7 5\\
1 5 3 7 2 8 4 6 9\\
\_ 7 9 1 4 5 \_ 2 3\\}
\\
Here is the history of moves made so far:\\
Action taken: Fill cell at (0, 7) with 5\\
\textbf{Action taken: Fill cell at (8, 0) with 6}\\
What is your immediate next move?
\end{adjustwidth}
$\cdots$

\end{tcolorbox}

\begin{tcolorbox}[breakable, colframe=gray, coltext=black, title=Markov Model $\pimkv$, fonttitle=\bfseries, boxrule=0.5mm, width=\textwidth]

\bluebold{State $s_0$}
\begin{adjustwidth}{2em}{0pt}
Current board state:\\
\texttt{3 6 4 8 1 9 2 \_ 7\\
5 9 8 3 7 2 6 1 4\\
\_ 2 \_ 5 6 4 9 3 8\\
8 1 5 4 3 6 7 9 2\\
9 \_ 6 2 8 7 5 4 1\\
2 4 \_ \_ 5 1 3 8 6\\
4 8 2 6 9 3 1 7 5\\
1 5 3 7 2 8 4 6 9\\
\_ 7 9 1 4 5 \_ 2 3\\}
\\
What is your immediate next move?
\end{adjustwidth}

\greenbold{Action $a_0$}
\begin{adjustwidth}{2em}{0pt}
\textbackslash boxed\{0,7,5\}
\end{adjustwidth}

\bluebold{State $s_1$}
\begin{adjustwidth}{2em}{0pt}
Current board state:\\
\texttt{3 6 4 8 1 9 2 5 7\\
5 9 8 3 7 2 6 1 4\\
\_ 2 \_ 5 6 4 9 3 8\\
8 1 5 4 3 6 7 9 2\\
9 \_ 6 2 8 7 5 4 1\\
2 4 \_ \_ 5 1 3 8 6\\
4 8 2 6 9 3 1 7 5\\
1 5 3 7 2 8 4 6 9\\
\_ 7 9 1 4 5 \_ 2 3\\}
\\
What is your immediate next move?
\end{adjustwidth}

\greenbold{Action $a_1$}
\begin{adjustwidth}{2em}{0pt}
\textbackslash boxed\{8,0,6\}
\end{adjustwidth}
$\cdots$

\end{tcolorbox}

\begin{tcolorbox}[breakable, colframe=gray, coltext=black, title=State-Action-Seq. Model $\pistactseq$, fonttitle=\bfseries, boxrule=0.5mm, width=\textwidth]

\bluebold{State $s_0$}
\begin{adjustwidth}{2em}{0pt}
Here is the current board state:\\
\texttt{3 6 4 8 1 9 2 \_ 7\\
5 9 8 3 7 2 6 1 4\\
\_ 2 \_ 5 6 4 9 3 8\\
8 1 5 4 3 6 7 9 2\\
9 \_ 6 2 8 7 5 4 1\\
2 4 \_ \_ 5 1 3 8 6\\
4 8 2 6 9 3 1 7 5\\
1 5 3 7 2 8 4 6 9\\
\_ 7 9 1 4 5 \_ 2 3\\}
\\
What is your immediate next move?
\end{adjustwidth}

\greenbold{Action $a_0$}
\begin{adjustwidth}{2em}{0pt}
\textbackslash boxed\{0,7,5\}
\end{adjustwidth}

\bluebold{State $s_1$}
\begin{adjustwidth}{2em}{0pt}
Here is the history of moves made so far:\\

State of the board:\\
\texttt{3 6 4 8 1 9 2 \_ 7\\
5 9 8 3 7 2 6 1 4\\
\_ 2 \_ 5 6 4 9 3 8\\
8 1 5 4 3 6 7 9 2\\
9 \_ 6 2 8 7 5 4 1\\
2 4 \_ \_ 5 1 3 8 6\\
4 8 2 6 9 3 1 7 5\\
1 5 3 7 2 8 4 6 9\\
\_ 7 9 1 4 5 \_ 2 3\\}
\\
\textbf{Action taken: Fill cell at (0, 7) with 5}\\

Here is the current board state:\\

\texttt{3 6 4 8 1 9 2 5 7\\
5 9 8 3 7 2 6 1 4\\
\_ 2 \_ 5 6 4 9 3 8\\
8 1 5 4 3 6 7 9 2\\
9 \_ 6 2 8 7 5 4 1\\
2 4 \_ \_ 5 1 3 8 6\\
4 8 2 6 9 3 1 7 5\\
1 5 3 7 2 8 4 6 9\\
\_ 7 9 1 4 5 \_ 2 3\\}
\\

What is your immediate next move?
\end{adjustwidth}
\greenbold{Action $a_1$}
\begin{adjustwidth}{2em}{0pt}
\textbackslash boxed\{8,0,6\}
\end{adjustwidth}

\bluebold{State $s_2$}
\begin{adjustwidth}{2em}{0pt}
Here is the history of moves made so far:\\

State of the board:\\
\texttt{3 6 4 8 1 9 2 \_ 7\\
5 9 8 3 7 2 6 1 4\\
\_ 2 \_ 5 6 4 9 3 8\\
8 1 5 4 3 6 7 9 2\\
9 \_ 6 2 8 7 5 4 1\\
2 4 \_ \_ 5 1 3 8 6\\
4 8 2 6 9 3 1 7 5\\
1 5 3 7 2 8 4 6 9\\
\_ 7 9 1 4 5 \_ 2 3\\}
\\
Action taken: Fill cell at (0, 7) with 5\\

State of the board:\\
\texttt{3 6 4 8 1 9 2 5 7\\
5 9 8 3 7 2 6 1 4\\
\_ 2 \_ 5 6 4 9 3 8\\
8 1 5 4 3 6 7 9 2\\
9 \_ 6 2 8 7 5 4 1\\
2 4 \_ \_ 5 1 3 8 6\\
4 8 2 6 9 3 1 7 5\\
1 5 3 7 2 8 4 6 9\\
\_ 7 9 1 4 5 \_ 2 3\\}
\\
\textbf{Action taken: Fill cell at (8, 0) with 6}\\

Here is the current board state:\\

\texttt{3 6 4 8 1 9 2 5 7\\
5 9 8 3 7 2 6 1 4\\
\_ 2 \_ 5 6 4 9 3 8\\
8 1 5 4 3 6 7 9 2\\
9 \_ 6 2 8 7 5 4 1\\
2 4 \_ \_ 5 1 3 8 6\\
4 8 2 6 9 3 1 7 5\\
1 5 3 7 2 8 4 6 9\\
6 7 9 1 4 5 \_ 2 3\\}
\\

What is your immediate next move?\\
\end{adjustwidth}
$\cdots$

\end{tcolorbox}

\subsection{Thinking Mode Hinders Models from Solving Problems in Multiple Steps}
\label{app:disable-think-mode}

As discussed in \cref{par:exp-impl-details}, we intentionally disable the model’s reasoning mode and require it to output only the discrete action, without test-time chain-of-thought. In this section, we use the following example to show that models with enabled thinking mode fails to decompose the problem into independent, progressive decision steps.

\begin{tcolorbox}[breakable, colframe=gray, coltext=black, title=Output of Qwen3-4B with thinking mode enabled, fonttitle=\bfseries, boxrule=0.5mm, width=\textwidth]

\textbf{System prompt}

You are playing Sudoku, a logic-based number placement puzzle.\\
\\
Rules:\\
- Fill the grid with numbers from 1 to 9.\\
- Each number must appear exactly once in each row, column, and 3x3 subgrid.\\
- Empty cells are represented by underscores (\_).\\
\\
Please provide only your next move in the format: \textbackslash boxed\{row, col, number\} where row and col are 0-indexed and number is between 1 and 9.\\

\textbf{User prompt}

Current board state:\\
\texttt{3 6 4 8 1 9 2 \_ 7\\
5 9 8 3 7 2 6 1 4\\
\_ 2 \_ 5 6 4 9 3 8\\
8 1 5 4 3 6 7 9 2\\
9 \_ 6 2 8 7 5 4 1\\
2 4 \_ \_ 5 1 3 8 6\\
4 8 2 6 9 3 1 7 5\\
1 5 3 7 2 8 4 6 9\\
\_ 7 9 1 4 5 \_ 2 3\\}
\\
What is your immediate next move?\\
\\
Please provide only your next move in the format: \textbackslash boxed\{row, col, number\} where row and col are 0-indexed and number is between 1 and 9.\\

\textbf{Model output}

\texttt{\textless think\textgreater}\\
Okay, let's try to figure out the next move for this Sudoku puzzle. First, I need to look at the current board state and see where I can find a cell that has only one possible number that can fit. Let me start by going through each row, column, and subgrid to check for any missing numbers.

$\cdots$

So existing numbers are 4,8,5,6,2,3,9. Missing are 1,7. Therefore, row 2 column 2 can be 1 or 7. Also, the subgrid (top-left) for row 2 column 2 is rows 0-2, columns 0-2. Existing numbers in that subgrid are 3,6,4 (row 0), 5,9,8 (row 1), and row 2: \_,2,\_. So existing numbers are 3,6,4,5,9,8,2. Missing are 1,7. Therefore, row 2 column 2 can be 1 or 7. So no unique move here. Therefore, not helpful.

\textbf{So the possible moves are row 0 column 7 (5), row 8 column 0 (6), row 8 column 6 (8), row 5 column 2 (7).} So which one is the next move? The problem is that the user may have a unique next move. But in the given puzzle, there might be multiple possibilities. However, the user wants the next move, so maybe the first one that is determined. Since row 0 column 7 is uniquely determined as 5, that's a valid move. Therefore, I think that is the correct answer.\\
\texttt{\textless/think\textgreater}

To determine the **immediate next move** in the given Sudoku puzzle, we examine the current state of the board for any cell that can be uniquely filled based on the rules of Sudoku: each number from 1 to 9 must appear exactly once in every row, column, and 3x3 subgrid.

---

\#\#\# Analyzing the Cell at Row 0, Column 7

The current value in the row is:

```
3 6 4 8 1 9 2 \_ 7
```

This is **Row 0**, and the empty cell is at **Column 7**. Let’s determine what number can go here.

- The existing numbers in **Row 0** are: 3, 6, 4, 8, 1, 9, 2, 7.
- The missing number in the row is **5**.
  
Now check if **5** is valid in **Column 7**:
- Column 7 values are: 1, 3, 9, 4, 8, 7, 6, 2 (from rows 1 to 8).
- So **5** is not in column 7 → it is **valid**.

Next, check the **subgrid** for **Row 0, Column 7** (the top-right 3x3 subgrid, rows 0–2, columns 7–8):
- Existing values: 7 (from row 0, col 8), 1 (row 1, col 7), 4 (row 1, col 8), 3 (row 2, col 7), 8 (row 2, col 8).
- No 5 is in the subgrid → valid.

Hence, **5 is the only number that can go into Row 0, Column 7**.

---

\#\#\# Final Answer

$$
\boxed{0, 7, 5}
$$

\end{tcolorbox}

This example illustrates the model's behavior as an implicit transition model; it forecasts future board states within its reasoning trace and solves the problem holistically rather than through decomposition.

\section{Implementation Details}
\label{sec:implementation-details}

\subsection{Datasets}
As discussed in \cref{par:exp-datasets}, we use three logical tasks during training and benchmarking. In this section, we provide details on these datasets for a more comprehensive setup description and reproducibility. 

\paragraph{Dataset Sizes}
As detailed in \cref{par:exp-impl-training}, we introduce a SFT warm-up phase before RL post-training. We use SFT dataset of different sizes to warm up different models on various tasks, due to the differences in base model capabilities and task complexities. The sizes are listed in \cref{tab:sft-warmup-dataset-size}. All models are trained for 1 epoch. 

\begin{table*}[ht]
    \centering
    \small
    \setlength{\tabcolsep}{15pt}
    \begin{tabular}{lccc} 
        \toprule
        \textbf{Model} & \textbf{Sudoku} & \textbf{Sokoban} & \textbf{Futoshiki} \\
        \midrule
        \textbf{Qwen3-4B} & 4,000 & 4,000 & 4,000 \\
        \textbf{Qwen2.5-3B-It} & 7,000 & 6,000 & 10,000 \\
        \bottomrule
    \end{tabular}
    \caption{SFT warm-up dataset sizes for each model on various tasks. }
    \label{tab:sft-warmup-dataset-size}
\end{table*}

In RL post-training phase, we use a dataset of 10,000 samples for Sudoku, 6,000 samples for Sokoban, and 6,000 samples for Futoshiki. We additionally synthesize 100 problems as test set for each task. For experiments in \cref{sec:markov-pure-sft}, we use larger SFT datasets than the one used in SFT warm-up, whose sizes are listed in \cref{tab:pure-sft-dataset-size}. All models are trained with 1 epoch, except that Qwen2.5-3B-It  is trained for 2 epochs on Sudoku dataset. 

\begin{table*}[ht]
    \centering
    \small
    \setlength{\tabcolsep}{15pt}
    \begin{tabular}{lccc} 
        \toprule
        \textbf{Model} & \textbf{Sudoku} & \textbf{Sokoban} & \textbf{Futoshiki} \\
        \midrule
        \textbf{Qwen3-4B} & 18,000 & 20,000 & 24,000 \\
        \textbf{Qwen2.5-3B-It} & 18,000$\times$2 & 24,000 & 32,000 \\
        \bottomrule
    \end{tabular}
    \caption{SFT dataset sizes for each model on various tasks used in \cref{sec:markov-pure-sft}. }
    \label{tab:pure-sft-dataset-size}
\end{table*}

In experiments conducted in \cref{sec:ablation-sft-warmup-degree}, we use the same dataset of size 18,000 training Qwen2.5-3B-It on Sudoku problems. 

\paragraph{Dataset Difficulties}
We synthesize the datasets using various configurations for different tasks. Details are presented in \cref{tab:dataset-config}. 
\begin{table*}[ht]
    \centering
    \small
    \begin{tabular*}{\textwidth}{@{\extracolsep{\fill}}lccc} 
        \toprule
        \textbf{Benchmark} & \textbf{Sudoku} & \textbf{Sokoban} & \textbf{Futoshiki} \\
        \midrule
        \textbf{Train set \& ID Tests} & \makecell[l]{Board size: $9\times 9$\\ Number of blanks: 6} & \makecell[l]{Board size: $9\times 9$\\ Minimal number of steps: 6--10} & \makecell[l]{Board size: $5\times 5$\\ Number of blanks: 8--10} \\
        \midrule
        \textbf{OOD Tests} & \makecell[l]{Board size: $9\times 9$\\ Number of blanks: 10} & \makecell[l]{Board size: $9\times 9$\\ Minimal number of steps: 12--14} & \makecell[l]{Board size: $5\times 5$\\ Number of blanks: 12--14} \\
        \bottomrule
    \end{tabular*}
    \caption{Configurations of different tasks used in training and testing.}
    \label{tab:dataset-config}
\end{table*}

An exception is the Sokoban dataset used in \cref{sec:exp-markov-generalization}. Given the capability constraints of the base models, we use lower-complexity Sokoban tasks. For in-distribution benchmarks, the minimal number of steps is between $2 \sim 4$; for out-of-distribution benchmarks, it is between $4 \sim 6$. 

\subsection{Training}
We use rLLM \citep{rllm2025} as the training and inference framework, which is backed by VERL \citep{sheng2024hybridflow}. We follow the most default hyperparameter during training. Specifically, we adopt a KL divergence coefficient of $0.001$, a learning rate of $1\times 10^{-6}$, a batch size of $128$, and sample $8$ responses for each group. The mini-batch size is set to 128, except for training Qwen2.5-3B-It on Sokoban after SFT warm-up, where a size of 64 is used.

For SFT warm-up stage, we use VERL \cite{sheng2024hybridflow} to train the models. The batch size is set to 256 and the learning rate is chosen as $5\times 10^{-6}$. 

\subsection{Training Details in \texorpdfstring{\cref{sec:exp-markov-generalization}}{Section \ref{sec:exp-markov-generalization}}}
\label{sec:prm-astar-comp}
In \cref{sec:exp-markov-generalization}, we use $\Astar$ to replace the estimated advantaged $\widehat A^\iter{i}$ in GRPO. Particularly, the orginal objective of GRPO is 

\begin{align}
    \mathcal{J}^{\sf GRPO}(\theta) &= \mathbb{E}_{x \sim \rho, \{o^\iter{i}\}_{i=1}^G \sim \pi_{\theta_{\sf old}}(\cdot \mid x)} \Bigg[ \frac{1}{G} \sum_{i=1}^G \frac{1}{|o_i|} \sum_{h=1}^{|o_i|} \Bigg\{ \nonumber \\
    & \quad \min \left[ \frac{\pi_\theta(a_h^\iter{i} \mid s_h^\iter{i})}{\pi_{\theta_{\sf old}}(a_h^\iter{i} \mid s_h^\iter{i})} \widehat A^\iter{i}, \text{clip}\left(\frac{\pi_\theta(a_h^\iter{i} \mid s_h^\iter{i})}{\pi_{\theta_{\sf old}}(a_h^\iter{i} \mid s_h^\iter{i})}, 1-\epsilon, 1+\epsilon \right) \widehat A^\iter{i} \right] - \beta D_{\text{KL}}(\pi \| \piref) \Bigg\}\Bigg],
    \label{eq:grpo-obj}
\end{align}

where $\widehat A^\iter{i} = \frac{r(x, o^\iter{i}) - \operatorname{mean}(\{r(x, o^\iter{1}), \dots, r(x, o^\iter{G})\})}{\operatorname{std}(\{r(x, o^\iter{1}), \dots, r(x, o^\iter{G})\})}$. Here, $a_h$ represents the $h$-th token and $s_h$ is the concatenation of all previous tokens. 

Replacing $\widehat A^\iter{i}$ with $\Astar$, the objective becomes

\begin{align*}
    \mathcal{J}^{\sf GRPO}_{\Astar}(\theta) &= \mathbb{E}_{x \sim \rho, \{o^\iter{i}\}_{i=1}^G \sim \pi_{\theta_{\sf old}}(\cdot \mid x)} \Bigg[ \frac{1}{G} \sum_{i=1}^G \frac{1}{|o_i|} \sum_{h=1}^{|o_i|} \Bigg\{ \\
    & \quad \min \left[ \frac{\pi_\theta(a_h^\iter{i} \mid s_h^\iter{i})}{\pi_{\theta_{\sf old}}(a_h^\iter{i} \mid s_h^\iter{i})} {\Astar}^\iter{i}, \text{clip}\left(\frac{\pi_\theta(a_h^\iter{i} \mid s_h^\iter{i})}{\pi_{\theta_{\sf old}}(a_h^\iter{i} \mid s_h^\iter{i})}, 1-\epsilon, 1+\epsilon \right) {\Astar}^\iter{i} \right] - \beta D_{\text{KL}}(\pi \| \piref) \Bigg\}\Bigg],
\end{align*}
where the advantage of the optimal policy ${\Astar}^\iter{i}$ is calculated by the single-step response $o_i$. 

Consider a response $o$. Let $\tilde{s}$ denote the current state (i.e., the board configuration for Markov models, and the history sequence for action-sequence and state--action-sequence models), and let $\tilde{a}$ denote the action represented by response $o$ (i.e., the row index, column index, and value to fill for Sudoku and Futoshiki; or the movement direction for Sokoban). The computation of $\Astar(\tilde{s}, \tilde{a})$ is task dependent.

For Sudoku and Futoshiki, we set the discount factor to $\gamma = 1$. For any valid board state $\tilde{s}$, the task is always solvable, and thus $V^\star(\tilde{s}) = 1$. If action $\tilde{a}$ transitions $\tilde{s}$ to another valid state $\tilde{s}'$---that is, the filled number does not violate any constraints and makes progress toward the final solution---then $Q^\star(\tilde{s}, \tilde{a}) = 1$; otherwise, $Q^\star(\tilde{s}, \tilde{a}) = 0$. Consequently, $\Astar(\tilde{s}, \tilde{a}) = 0$ if action $\tilde{a}$ can still lead to a correct final solution, and $-1$ otherwise.

For Sokoban, we use a discount factor of $\gamma = 0.5$. Let $n$ denote the minimum number of steps required to reach the goal from state $\tilde{s}$, and let $n'$ denote the minimum number of steps after taking action $\tilde{a}$. The value $n'$ may be infinite if $\tilde{a}$ leads to an unsolvable state. By definition, the optimal value and action-value functions are given by
\[
V^\star(\tilde{s}) = \gamma^{n-1}, \qquad
Q^\star(\tilde{s}, \tilde{a}) = \gamma^{n'-1}.
\]
Therefore,
\[
\Astar(\tilde{s}, \tilde{a}) = \gamma^{n'-1} - \gamma^{n-1}.
\]
In practice, we compute $n$ and $n'$ using breadth-first search.

\subsection{State Prediction Models}
We train a state prediction model $\pistatetrans$ based on Qwen2.5-3B-Instruct via SFT to predict the next state $s_{h+1}$ from the current state $s_h$ and action $a_h$. Particularly, we first collect triplets $(s, a, s')$ from the environment $\mathcal{P}^\star$, where $s'=\mathcal{P}(s,a)$. We then use $(s,a)$ as prompt and SFT the base model to predict $s'$. In practice, we use 174k samples to train a state prediction model for Sudoku, 91k samples for Sokoban, and 108k samples for Futoshiki. At test time, $\pistatetrans$ replaces the environment $\mathcal{P}^\star$, enabling deployment without environment access.

\end{document}